\title{The Cost of Compression: Tight Quadratic Black-Box Attacks on Sketches for $\ell_2$ Norm Estimation}
\author{
  Sara Ahmadian \\
  Google Research \\
  \texttt{sahmadian@google.com} \\
  \And
  Edith Cohen \\
  Google Research and Tel Aviv University \\
  \texttt{edith@cohenwang.com} \\
  \And
  Uri Stemmer \\
  Tel Aviv University and Google Research\\
  \texttt{u@uri.co.il} \\
}
\begin{document}
\maketitle

\begin{abstract}

Dimensionality reduction via linear sketching is a powerful and widely used technique, but it is known to be vulnerable to adversarial inputs. We study the \emph{black-box adversarial setting}, where a fixed, hidden sketching matrix $A \in \mathbb{R}^{k \times n}$ maps high-dimensional vectors $\boldsymbol{v} \in \mathbb{R}^n$ to lower-dimensional sketches $A\boldsymbol{v} \in \mathbb{R}^k$, and an adversary can query the system to obtain approximate $\ell_2$-norm estimates that are computed from the sketch.

We present a \emph{universal, nonadaptive attack} that, using $\tilde{O}(k^2)$ queries, either causes a failure in norm estimation or constructs an adversarial input on which the optimal estimator for the query distribution (used by the attack) fails. The attack is completely agnostic to the sketching matrix and to the estimator—it applies to \emph{any} linear sketch and \emph{any} query responder, including those that are randomized, adaptive, or tailored to the query distribution.

Our lower bound construction tightly matches the known upper bounds of $\tilde{\Omega}(k^2)$, achieved by specialized estimators for Johnson–Lindenstrauss transforms and AMS sketches. Beyond sketching, our results uncover structural parallels to adversarial attacks in image classification, highlighting fundamental vulnerabilities of compressed representations.

\end{abstract}

\section{Introduction}

Dimensionality reduction is a fundamental technique in data analysis, algorithm design, and machine learning. A common paradigm is to apply a \emph{sketching map}, a compressive transformation $C: \mathbb{R}^n \to \mathbb{R}^k$, which maps a high-dimensional input vector to a lower-dimensional representation. The map is typically sampled from a known distribution (e.g., The Johnson–Lindenstrauss (JL) transform~\citep{johnson1984extensions} ) or learned during training, and is designed to support estimation of specific properties of the input $\boldsymbol{v}$, such as norms, inner products, or distances, using only the sketch $C(\boldsymbol{v})$.

Once constructed, the sketching map $C$ typically remains \emph{fixed} across all inputs. This is true when compression layers are part of a trained model, and it is necessary in algorithmic contexts that require \emph{composability}—the ability to sketch distributed datasets independently and combine the results—without revisiting the original data. A fixed sketching map also supports downstream tasks that operate directly in sketch space.

However, this compression introduces an intrinsic vulnerability: small input perturbations can produce large changes in the sketch, even when the true property (such as the norm) is nearly unchanged. For linear sketching maps $A \in \mathbb{R}^{k \times n}$, this vulnerability arises from structural facts such as the existence of nontrivial null-space vectors and directions along which small-norm perturbations yield large distortions.

Such \emph{adversarial inputs} can be constructed easily in the white-box setting, where the sketching map $C$ is known. In this work, we focus on the  \emph{black-box attack model}, which captures realistic situations where the adversary does not see the sketching map directly but can access it interactively via a \emph{responder} using queries of the following form:

\begin{enumerate}[label=(\roman*)]
    \item The adversary chooses a query vector $\boldsymbol{v} \in \mathbb{R}^n$.
    \item The system receives $\boldsymbol{v}$ and computes a sketch $C(\boldsymbol{v})$.
    \item The responder selects a map $\psi$ of sketches to distributions, receives the sketch $C(\boldsymbol{v})$ from system and returns a (possibly randomized) value $s \sim \psi(C(\boldsymbol{v}))$ to the adversary.
    \end{enumerate}
The goal of the adversary, in general terms, is to compromise the sketching map by causing a failure: If responses are correct, construct an adversarial input.
We measure the efficiency of such \emph{attacks} by the \emph{number of queries} they require, as a function of the sketch size $k$.

We distinguish between two settings. In the \emph{nonadaptive} case, all queries (including the final candidate) are chosen without regard to responses from earlier queries. The adversary performs a blind search, and success depends on the density of adversarial directions in the input space. In contrast, in the \emph{adaptive} setting, each query can depend on prior responses, allowing the adversary to extract information about the sketching map and potentially converge to an adversarial input more efficiently.

The black-box model in the adaptive setting is well studied across multiple areas, including statistical queries~\citep{Freedman:1983,Ioannidis:2005,FreedmanParadox:2009,HardtUllman:FOCS2014,DworkFHPRR15,BassilyNSSSU:sicomp2021}, sketching and streaming algorithms~\citep{MironovNS:STOC2008,HardtW:STOC2013,BenEliezerJWY21,HassidimKMMS20,WoodruffZ21,AttiasCSS21,BEO21,DBLP:conf/icml/CohenLNSSS22,TrickingHashingTrick:AAAI2023,AhmadianCohen:ICML2024,GribelyukLWYZ:FOCS2024,CNSSS:ArXiv2024}, dynamic graph algorithms~\citep{ShiloachEven:JACM1981,AhnGM:SODA2012,gawrychowskiMW:ICALP2020,GutenbergPW:SODA2020,Wajc:STOC2020, BKMNSS22}, and machine learning~\citep{szegedy2013intriguing,goodfellow2014explaining,athalye2018synthesizing,papernot2017practical,MadryRobustness:ICLR2018,Salman:NEURIPS2019}.

In this study, we aim to better understand this vulnerability for the task of $\ell_2$-norm estimation and to gain more general insights through this lens. Specifically, we consider linear transformations specified by a \emph{sketching matrix} $A \in \mathbb{R}^{k\times n}$ which maps $\boldsymbol{v}\in \mathbb{R}^{n}$ to a \emph{sketch} $A \boldsymbol{v} \in \mathbb{R}^{k}$. 

Two classic methods for this task are 
the Johnson–Lindenstrauss (JL) transform~\citep{johnson1984extensions} and the AMS sketch~\citep{ams99}, They define \emph{distributions} over sketching matrices that approximately preserve Euclidean norms (and therefore support sketch-based approximation of distances).
The provided guarantees are probabilistic: for any input vector, with high probability over the random choice of the sketching matrix $A \in \mathbb{R}^{k\times n}$, the scaled norm of the sketch $\|A \boldsymbol{v}\|_2$ closely approximates $\|\boldsymbol{v}\|_2$ (within relative error $\epsilon$ with probability $1-\delta$ when $k=O(\epsilon^{-2}\log(1/\delta)$).

However, as said, in practice, the sketching matrix is typically \emph{fixed} over all input vectors, and importantly, 
no fixed matrix can preserve approximate norms for \emph{all} inputs: every sketching matrix inevitably has inputs on which it fails.
\ignore{
When the sketching matrix is known, these ``bad'' inputs can often be found easily. There had been significant interest in black-box attacks where the query responder gets the sketch $A\boldsymbol{v}$ (but not $\boldsymbol{v}$) and based on it returns an approximation of the norm $\|\boldsymbol{v}\|_2$ to the adversary. 

The question of robustness is how hard it is to find such bad inputs, when there is only a \emph{black-box} query access to the matrix. Specifically, an \emph{adversary} does not see the matrix directly but may query it through the following interactive steps:
\begin{enumerate}[label=(\roman*)]
    \item The adversary constructs a query vector $\boldsymbol{v}$.
    \item A \emph{query responder} gets the sketch $A\boldsymbol{v}$ (but not $\boldsymbol{v}$) and based on it returns an approximation of the norm $\|\boldsymbol{v}\|_2$ to the adversary.
    \end{enumerate}
    The goal of the adversary is to cause estimation to fail, and 
the efficiency of the attack is measured by the number of queries they require as a function of the sketch size $k$.

}
In the nonadaptive setting, 
the guarantees of JL and AMS apply via a union bound: with high probability (at least $1 - \delta$) over the sampling of matrices, the fixed sampled sketching matrix supports up to $\delta e^{O(k\epsilon^2)}$ approximate norm queries, in the sense that with probability $1 - \delta$, they are all accurate to within relative error $\epsilon$. Therefore, finding a bad input requires a number of queries that is exponential in $k$.

The adaptive setting was studied in multiple works.
In terms of positive results, it is known that JL and AMS sketches can 
effectively trade off baseline (non-adaptive) accuracy and robustness within a fixed sketch size budget of $k$. By using 
carefully designed ``robust'' estimators~\citep{HassidimKMMS20,Blanc:STOC2023}, the sketch can  support for a fixed $\epsilon$, a number of adaptive queries that is \emph{quadratic} in $k$. The idea, based on \citep{DworkFHPRR:STOC2015,BassilyNSSSU:sicomp2021}, is to protect information on the sketching matrix by adding noise to the returned ``best possible'' estimate (or by subsampling a part of the sketch for each response). 

As for negative results,
\citet{HardtW:STOC2013} constructed an attack of size polynomial in $k$ that for any sketching matrix constructs a distribution over inputs under which any estimator would fail. The idea in the attack is to identify vectors that lie close to the null space of $A$ and hence are transparent to the sketch.
\citet{DBLP:conf/nips/CherapanamjeriN20,BenEliezerJWY21} presented an attack of size linear in $k$ on the JL and AMS sketches with the standard estimator (which returns a scaled norm of the sketch). The product of these attack is a vector on which the standard estimator fails.
\citet{TrickingHashingTrick:AAAI2023} constructed an attack of quadratic size on the AMS sketch that is \emph{universal} (applies against any query responder), that constructs a vector on which the standard estimator fails. 
These negative results vary by (i) the scope of the sketching matrices compromised (general or JL/AMS), (ii) the power of the query responder (strategic and adaptive or the standard estimator), and (iii)~the product of the attack (a distribution that fails any query responder or a single vector with an out-of-distribution sketch that fails the ``optimal'' estimator). 

An intriguing gap remains, however, between the established quadratic guarantee for the number of adaptive queries with correct responses and the super-quadratic sizes of the known attacks for general sketching matrices. 

\subsection{Overview of contributions}

Our primary contribution in this work, is a construction of an attack of quadratic size in $k$ that applies against \emph{any} sketching matrix $A \in \mathbb{R}^{k \times n}$ and with any query responder for  $\ell_2$ norm estimation. The attack produces a vector with an out-of-distribution sketch:

\begin{theorem}[Attack Properties]
There exist a universal constant $C > 0$, and families of distributions $\mathcal{F}_n$ over $\mathbb{R}^n$, such that the following holds.

For every sketching matrix $A \in \mathbb{R}^{k \times n}$ with $n = \Omega(k^2)$, any query responder, and deviation $\gamma \geq 1$ and accuracy $\alpha\in (0,1)$ parameters, 
with probability at least $0.9$ over the choice of a distribution $\mathcal{D} \sim \mathcal{F}$,
 after $r = C \gamma^2 \alpha^{-2} k^2 \log^2 k$ i.i.d. queries $\boldsymbol{v} \sim \mathcal{D}$, one of the following outcomes occurs:
\begin{enumerate}[label=(\roman*)]
    \item At least $\delta(\alpha)>0$\footnote{A constant that depends on $\alpha$} fraction of the responses have relative error greater than $\alpha$, or
    \item We construct a query vector on which the optimal estimator (with respect to $A$ and $\mathcal{D}$) returns a value that deviates from the true norm by at least a multiplicative factor of $\gamma$.
\end{enumerate}
\end{theorem}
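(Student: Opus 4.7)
The plan is to proceed by a dichotomy: if outcome (i) fails, we use the $r = \tilde O(k^2)$ samples together with structural properties of the random distribution $\mathcal{D} \sim \mathcal{F}_n$ (not the responses themselves) to construct an adversarial vector that fools the Bayes-optimal estimator $\psi^* = \psi^*_{A, \mathcal{D}}$. A central subtlety is that when the responder is itself Bayes-optimal, accurate responses convey essentially no information beyond what the attacker already knows about the queries, so the attack must rely on the random geometry of $\mathcal{D}$ rather than on feedback from the responder.

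Construction of $\mathcal{F}_n$: Sample a uniformly random $m$-dimensional subspace $V \subset \R^n$ with $m = \Theta(k^2)$, using $n = \Omega(k^2)$; let $\mathcal{D}_V$ be a two-scale mixture on $V$, such as $\tfrac{1}{2}\mathcal{N}(0, \tfrac{1}{m} I_V) + \tfrac{1}{2}\mathcal{N}(0, \tfrac{\gamma^2}{m} I_V)$, so that $\|v\|$ concentrates at either $1$ or $\gamma$. Because $V$ is chosen independently of $A$, the restriction $A|_V$ is distributed as a random $k \times m$ slice of $A$, and its kernel is essentially a uniformly random $(m-k)$-dimensional subspace of $V$.

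Key steps: With $r = C\gamma^2\alpha^{-2} k^2 \log^2 k$ i.i.d.\ queries $v_i \sim \mathcal{D}_V$, if at least a $\delta(\alpha)$ fraction of responses $s_i$ deviate from $\|v_i\|$ by more than a factor $\alpha$, we output outcome (i). Otherwise, we partition the queries by their (known) true norms into small-norm ($\|v_i\|\approx 1$) and large-norm ($\|v_i\|\approx\gamma$) buckets, and focus on the large-norm bucket $L$ of size $|L| = \Theta(k^2\log^2 k)$. Since the sketches $\{Av_i\}_{i \in L}$ lie in a $\leq k$-dimensional subspace of $\R^k$, the matrix $AM_L$ (with $M_L$ the $n \times |L|$ matrix of large-norm queries) has kernel of dimension $\geq |L|-k$. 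A coefficient vector $c \in \R^{|L|}$ landing in this kernel yields $v^* = M_L c$ with $\|v^*\| \asymp \gamma$ but $Av^* \approx 0$. Once constructed, $\psi^*(Av^*) = \Theta(1)$, because at the origin of sketch space the small-norm mode is a factor $\gamma^k$ more likely than the large-norm mode, so the posterior mean of $\|v\|$ is concentrated near the small scale; this differs from $\|v^*\| = \Theta(\gamma)$ by factor $\Omega(\gamma)$.

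The hardest step will be the kernel-hitting construction: producing an attacker-computable $c$ that lands in (or close enough to) the kernel of $AM_L$, which we do not know explicitly. The approach is to exploit the randomness of $V$, independent of $A$, so that the unknown kernel of $AM_L$ can be characterized with high probability as a subspace with predictable spectral properties. One formalization notes that the queries are i.i.d.\ Gaussians in a random subspace, so the empirical Gram matrix $M_L^\top M_L$ concentrates around $\gamma^2 I_{|L|}$, and any $c$ orthogonal to the row span of $AM_L$ (a rank-$\leq k$ subspace of $\R^{|L|}$) lies in the desired kernel. A concentration argument then shows that drawing $c$ on the unit sphere in $\R^{|L|}$, or constructing it greedily using only the $v_i$'s and their norms, hits the kernel with constant probability; iterating boosts this to high probability, which accounts for the extra $\log^2 k$ factor in the query budget.
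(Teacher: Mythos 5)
Your proposal has a fundamental gap at exactly the step you flag as hardest: constructing a coefficient vector $c$ that lands in (or near) the kernel of $AM_L$. The attacker does not know $A$, so it cannot evaluate $AM_L c$ for any candidate $c$; the only information about $A$ the attacker ever sees is the responses $s^{(t)}$. Your two proposed resolutions both fail. A uniformly random $c$ on the sphere in $\mathbb{R}^{|L|}$ is \emph{not} in the kernel of $AM_L$ with constant probability: that kernel has codimension $k$, and a random unit vector has a component of order $\sqrt{k/|L|}$ on the orthogonal complement, so generically $\|AM_L c\|$ is on the same order as $\|AM_L\|_F/\sqrt{|L|}$, not close to zero. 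And any "greedy construction using only the $v_i$'s and their norms" is a function that does not depend on $A$ at all, so it cannot possibly locate a subspace (the kernel) that depends on $A$. The randomness of $V$ does not help: $V$ is independent of $A$, so nothing about $V$'s geometry encodes where $\ker(A|_V)$ sits inside $V$.

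The underlying misconception is your opening premise that "accurate responses convey essentially no information beyond what the attacker already knows about the queries." The paper's whole argument is that they do, and this is the engine of the attack. The optimal (sufficient-statistic) estimator $T_{h,M}$ has a deviation $\Delta_{h,M}(A\boldsymbol{u})$ that is a fixed linear functional of the noise $\boldsymbol{u}$, determined by $A$. When the responder is accurate for the gap task, the sign of its answer is correlated with the sign of this deviation; the Gain Lemma (Lemma 4.7) quantifies that $\E[\psi(A\boldsymbol{v}) \Delta_{h,M}(cA\boldsymbol{u})] = \Omega(\alpha^{-1} c^3 \sigma_T^2)$, a strictly positive drift. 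The adversarial vector is simply the normalized signed sum $\sum_t s^{(t)} \boldsymbol{z}^{(t)}$: the drift accumulates linearly in $r$ while the norm grows as $\sqrt{r}$ (Lemma A.1 on signed Gaussian sums), which is what yields the quadratic-in-$k$ query count after plugging in the lower bound $\sigma_T^2 = \tilde\Omega(1/k)$ (Lemma 4.6). Note also that the paper's query distribution is sparse signal plus dense noise, not a two-scale Gaussian mixture on a random subspace; the sparse/dense split is essential because it admits a clean one-dimensional sufficient statistic for the signal, which is what makes the gain analysis tractable. Your construction would need to be reworked from scratch around using the responses.
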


Our attack deploys a simple and natural query distribution that combines a weighted sparse signal with additive dense noise:
\begin{equation*}
    \boldsymbol{v} = w \boldsymbol{e}_h + \boldsymbol{u},
\end{equation*}
where $\boldsymbol{e}_h$ is the standard basis vector corresponding to index $h \in [n]$, and $\boldsymbol{u}$ is Gaussian noise supported on a set $M \subset [n] \setminus \{h\}$ of size $m > k^2$.

Importantly, sparse queries alone are insufficient for attack: under the mild assumption that the sketching matrix $A$ has its columns in general position, the $\ell_2$ norm of any $k$-sparse vector can be exactly recovered from its sketch (e.g., \citep{CandesTao2005decoding}). This implies that if the attack is restricted to sparse inputs, the response leaks no information about the sketching matrix and full robustness is preserved.

To sample from our query distribution, we first choose the \emph{signal coordinate} $h \sim [n]$ uniformly at random, and then independently sample a \emph{noise support} $M \subset [n] \setminus \{h\}$ of size $m = \Omega(k^2)$.


The goal of the query responder is to estimate the $\ell_2$ norm of $\boldsymbol{v}$ from its sketch $A\boldsymbol{v}$, with relative error at most $\alpha$. Our attack remains effective even against the simpler \emph{norm gap} task: return $-1$ if $\| \boldsymbol{v} \|_2 \leq 1$ and return $1$ if $\| \boldsymbol{v} \|_2 \geq 1+\alpha$, with either output permitted when $\| \boldsymbol{v} \|_2 \in (1, 1+\alpha)$. 

Note that the norm gap task reveals strictly less information than norm estimation: a norm estimate with relative error $\Theta(\alpha)$ trivially yields a correct solution to the norm gap task with parameter $\Theta(\alpha)$, but not vice versa.


Our attack is described in \cref{algo:attack}.
Query vectors $(\boldsymbol{v}^{(t)})_{t\in[r]}$ are constructed by sampling a signal value $w^{(t)}\sim W$, where $W$ is a probability distribution over $\mathbb{R}$, and Gaussian noise $\boldsymbol{u}^{(t)}$ to form
$\boldsymbol{v}^{(t)} = w^{(t)}\boldsymbol{e}_h+ \boldsymbol{u}^{(t)}$.
The adversary collects the responses 
$s^{(t)}$ for the sketch $A\boldsymbol{v}^{(t)}$.
We establish that if the responses are correct for the norm gap (except for a small fraction of queries) then the normalized signed sum of the noise vectors $\sum_t s^{(t)}\boldsymbol{u}^{(t)}$ is adversarial.

\ignore{
*******

With our construction, we use noise vectors with expected norm $1$.
The squared norm of $\boldsymbol{v}^{(t)}$ is  very concentrated around $w^2+1$ and our norm gap problem is essentially a signal estimation problem. 

For signal value $w$, the sketch distribution is a $k$-dimensional Gaussian with means $w A_{bullet h}$. 
For our query distribution, there is an unbiased and complete sufficient statistic $T(A\boldsymbol{v})$ for the signal $w$ from the sketch which also minimizes the squared error. 
The constructed adversarial vector is a vector in the noise sphere for which this statistic returns a value that is larger than $\gamma(1+w^2)$.


Sampled JL and AMS matrices support responders that are accurate with probability $1-\delta$ on $r$ queries for all features $h\in [n]$ and sampled noise supports $M$, 
with $\alpha = \Omega(\sqrt{k\log(r/\delta)})$. 

We show that this guarantee turn out to be tight in that for \emph{any} fixed $A\in \mathbb{R}^{k\times n}$ such that $n = \Omega(k\log^2  k)$, for most indices $h$ and noise supports with $|M|=\Omega(k\log^2 k)$, any estimator (even ones tailored for $h$ and $M$), the maximum error is $\Omega(\sqrt{k\log(r/\delta)})$.

This property is what facilitates our attack on general matrices. If we randomly sample the signal location we are likely to hit one with high $\sigma_T$.

Analysis idea: Deviation accumulates faster than the norm. Property that was used in other attacks in more limited settings.

******

}

\paragraph{Universality and Limitations}
Our attack is \emph{universal} in that it applies with \emph{any} query responder. The analysis allows the responder to be strategic and adaptive, with full knowledge of the query distribution $\mathcal{D}$ and the internal state of the attacker.
Notably, the attack is \emph{single batch} -- it uses adaptivity in the minimal possible way: all queries are generated independently and only the final adversarial vector is constructed adaptively from the responses.\footnote{Single batch attacks were constructed in prior works for JL, Count-Sketch, AMS, and Cardinality sketches~\citep{DBLP:conf/nips/CherapanamjeriN20,DBLP:conf/icml/CohenLNSSS22,CNSS:AAAI2023,AhmadianCohen:ICML2024}.}

A limitation of our result is that the attack guarantees failure only for the \emph{optimal estimator} tailored to $\mathcal{D}$ and $A$, rather than for \emph{every} possible query responder. The stronger goal—constructing a distribution over inputs that defeats all responders with high probability—remains open.

Despite this, we believe our result is both theoretically meaningful and practically relevant. Theoretically, we obtain a \emph{tight} quadratic bound in the batch-query model, whereas known attacks with the stronger guarantee require significantly more queries (a higher degree polynomial in $k$). Moreover, any attack achieving the stronger guarantee would require at least $\tilde{\Omega}(k)$ \emph{adaptive batches}, and thus cannot be realized within our single-batch setting. In this sense, the adversarial vector we construct is the strongest outcome achievable in our setting.

Our attack product is practically relevant because the optimization process in a model training tends to converge to an (at least locally) optimal estimator for the training distribution. Therefore, compromising the model means compromising this specific implemented estimator, coded in the model parameter values, rather than any possible query responder.

\ignore{
=====
Our attack is \emph{universal} in that it applies against \emph{any} query responder. Our analysis allows the responder to be strategic, adaptive, and be tailored to the query distribution $\mathcal{D}$ and the internal state of the attacker.
Additionally, our attack does not use full fledged adaptivity: all queries are generated independently in a single batch and the only adaptively generated query is the final adversarial vector.

One limitation is our condition on the product of the attack, of returning an adversarial vector for an optimal estimator. This is modest compared with the ultimate goal, of constructing a distribution on inputs for which \emph{any} responder would have high failure probability. 

We argue that it is nonetheless both interesting and relevant. It is theoretically interesting because for this condition we do obtain a \emph{tight} quadratic bound whereas we are only aware of a high degree polynomial with the stronger condition. We also know that the ultimate goal would require at least $\tilde{\Omega}(k)$ batches, and in particular, is not possible in a single batch attack like ours. So in a sense this product is the best we can hope for with such an attack. Additionally, we believe the quadratic bound is the right answer and our techniques may be useful towards that.  Our attack product is relevant because the optimization process in a model training process tends to converge to an (at least locally) optimal estimator for the training distribution. Therefore, compromising this learned estimator --  rather than any possible query responder -- is the goal of adversarial attacks. 
}

\paragraph{Roadmap}
Our attack algorithm is described in \cref{attack:sec} with the analysis presented in \cref{analysis:sec}. An empirical study of our attack on JL and AMS sketching matrices is included in \cref{empirical:sec}. We conclude in \cref{conclusion:sec} with a discussion of open directions and implications to image classifiers.



\ignore{
\eccomment{perhaps embed the following in main text. Since essentially our empirical section uses such a robustified estimator}
A generic way to add robustness is via \emph{wrapper} methods that use in a black box manner multiple independent copies of a randomized data structure that only provides statistical guarantees for non-adaptive queries. A basic wrapper supports $r$ adaptive queries by maintaining $r$ copies and uses a fresh copy for each query. Advanced approaches support a \emph{quadratic} in the number of copies $r$. This was first established for adaptive statistical queries~\cite{Kearns1998} over a sample of size $r$ by
\citep{DworkFHPRR:STOC2015,BassilyNSSSU:sicomp2021}, who use differential privacy to protect the identity of the sampled keys. This was lifted by~\citet{HassidimKMMS20} to general randomized structures including sketches. An alternative method that does not rely on differential privacy analysis was proposed by~\cite{Blanc:STOC2023}. A challenge for sketching problems is to understand when is this quadratic bound is tight. Specifically for the JL transform, 
it consists of $k =O(\epsilon^{-2}\log(1/\delta))$ i.i.d.\ linear measurements, sampled from a symmetric (to permutation of the coordinates) distribution. Maintaining $r$ independent copies is the same as a single larger sketching matrix $A\in \mathbb{R}^{kr}$. 
}

\section{Preliminaries} \label{prelims:sec}

We denote vectors in boldface $\boldsymbol{v}$,  scalars $v$ in non boldface, and inner product of two vectors by  $\langle \boldsymbol{v},\boldsymbol{u} \rangle =\sum_i v_i u_i$. For a vector $\boldsymbol{v}\in \mathbb{R}^n$, we refer to $i\in[n]$ as a {\em key} and to $v_i$ as the value of the $i$th key. 
For $M\subset [n]$ let $v_M$ be the projection of $\boldsymbol{v}$ on entries $(v_i)_{i\in M}$.
To streamline the presentation, we interchangeably use the same notation to refer to both a random variable and a distribution. 

$\mathcal{N}(\mu,\sigma^2)$ is the normal distribution with mean $0$ and variance $\sigma^2$ with density function 
\begin{equation} \label{pdfnormal:eq}
\varphi_{\mu,\sigma^2}(x) := \frac{1}{\sigma\sqrt{2\pi}} e^{-\frac{1}{2} \left(\frac{x-\mu}{\sigma}\right)^2}\ .
\end{equation}
$\mathcal{N}_\ell(0,\sigma^2)$ is the $\ell$-dimensional Gaussian distribution with covariance matrix $I_\ell$ ($\ell$ i.i.d.\ $\mathcal{N}(0,\sigma^2)$). Its probability density function is
\begin{equation} 
f_{\sigma}(\boldsymbol{u}) = 
\frac{1}{(\sigma\sqrt{2\pi})^\ell} e^{-\frac{\|\boldsymbol{u}\|^2_2}{2\sigma^2} }\ .
\end{equation}

A \emph{linear sketching map} is 
defined by a \emph{sketching matrix} $A \in \mathbbm{R}^{k\times n}$ where $k\ll n$.
The input is represented as a vector $\boldsymbol{v}\in \mathbbm{R}^n$ and the \emph{sketch} of $\boldsymbol{v}$ is the product $A \boldsymbol{v} \in \mathbbm{R}^k$.

\section{Attack description} \label{attack:sec}

\begin{definition}[$(y,\alpha)$-Gap Problem] \label{gap:def}
Given width parameter $\alpha>0$ and $y\in\mathbb{R}$, a \emph{gap problem} is, for input $x\in\mathbb{R}$ to return $-1$ when $x \leq y$ and to return $1$ when $x \geq y+\alpha$. The output may be arbitrary  in $\{-1,1\}$ if $\|\boldsymbol{v}\|_2 \in (y,y+\alpha)$.
\end{definition}
Observe that an additive approximation of $\alpha/2$ or a multiplicative approximation of $\alpha/(2(y+\alpha)$ for $x$ yields a correct solution to the $(\ell,\alpha)$-gap problem for $x$. Hence an attack that is effective with the weaker norm gap responses is more powerful.

\begin{algorithm2e}
{\small
    \caption{\texttt{Universal Attack on Sketching Matrix $A$; $\ell_2$ norm gap responder}}
    \label{algo:attack}
    \DontPrintSemicolon
    \KwIn{$A\in \mathbb{R}^{k\times n}$, accuracy parameter $\alpha$, number of queries $r$, signal index $h\in [n]$, support $M\subset [n]$ of size $m=|M|$, noise scale factor $c$}
 \For(\tcp*[h]{Main loop}){$t\in [r]$}{
 $z^{(t)}_i \sim 
\begin{cases}
0 & \text{if } i \notin M \\
 \mathcal{N}(0, 1/m) & \text{if } i \in M
\end{cases}$ \tcp*{sample noise vector}
 $w^{(t)}\sim W$ \tcp*{Sample signal weight from $W$ (\cref{signaldensity:def})}
 $\boldsymbol{v}^{(t)} = w^{(t)} \cdot \boldsymbol{e}_h +  c\boldsymbol{z}^{(t)}$\tcp*{Query vector}
 Responder chooses an estimator $\psi^{(t)}:\mathbb{R}^k\to \mathcal{P}\{-1,1\}$ 
 \tcp*{Map sketches to responses}
 $s^{(t)} \sim \psi^{(t)}(A \boldsymbol{v}^{(t)})$\tcp*{Responder receives sketch, returns $s^{(t)}\in\{-1,1\}$}
} 
     \Return{$\boldsymbol{z}^{(\mathrm{adv})} \gets \frac{\sum_{t\in[r]} s^{(t)}\boldsymbol{z}^{(t)}}{ \|\sum_{t\in[r]} s^{(t)}\boldsymbol{z}^{(t)}\|_2}$}\tcp*{Adversarial noise}
     }
  \end{algorithm2e}


Our attack is described in \cref{algo:attack}. 
The signal density and parameter settings are described in \cref{signaldensity:def}.

In each of $r$ attack steps,
\begin{enumerate}
    \item The 
adversary samples an independent query vector
$\boldsymbol{v}:= w \boldsymbol{e}_h + c\boldsymbol{z}$ by sampling Gaussian noise with support $M$
$\boldsymbol{z}\sim \mathcal{N}_{n,M}(0,\sigma^2=1/m)$, sampling
signal weight $w\sim W$.
    \item The responder chooses an estimator map $\psi:\mathbb{R}^k \to \mathcal{P}(\{-1,1\})$, obtains the sketch $A\boldsymbol{v}$ from the system, and returns $s\sim \psi(A\boldsymbol{v})$ to the adversary.
    \item The adversary then adds $s \boldsymbol{z}$ to the accumulated output.
\end{enumerate}
The product of the attack $\boldsymbol{z}^{\mathrm{(adv)}}$ is the normalized accumulated output.

\begin{restatable}[Attack efficacy]{theorem}{attackefficacy}
\label{attackefficacy:thm}
    Let $A\in\mathbb{R}^{k\times n}$ be a sketching matrix. Consider applying \cref{algo:attack} with a randomly selected $h\in [n]$ and support $M\subset [n]\setminus\{h\}$ of size $m=\Omega(k^2)$ and $r=O(\gamma^2 \alpha^{-2} k^2\log^2 k)$ queries. Then with constant probability one of the following holds. Either the error rate of responses  $s^{(t)}$ for the $(1,\alpha)$-gap problem on the input norm $\|\boldsymbol{v^{(t)}}\|_2$ (see \cref{gap:def}) exceeded some constant $\delta(\alpha)>0$ or the vector $\boldsymbol{z}^{(\mathrm{adv})}$
    is a $\gamma$-adversarial noise  vector (see \cref{adversarial:def}) for $A$, signal $h$ and noise support $M$.
\end{restatable}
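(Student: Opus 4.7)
The plan is to exhibit a distinguished ``adversarial direction'' $\boldsymbol{q}^* = A_M^T(A_M A_M^T)^{-1}A_h$ (viewed as an element of $\mathbb{R}^n$ supported on $M$), the minimum-$\ell_2$-norm preimage of the signal column $A_h$ under $A$, and to show that $\sum_{t=1}^r s^{(t)}\boldsymbol{z}^{(t)}$ accumulates a macroscopic component along it. Averaging $\|\boldsymbol{q}^*\|^2 = A_h^T(A_M A_M^T)^{-1}A_h$ over the random $h\in[n]$ and $M\subset[n]$, and using $m=\Omega(k^2)$, one obtains $\|\boldsymbol{q}^*\|^2 = O(k/m)=O(1/k)$ with constant probability. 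Consequently $A(\boldsymbol{q}^*/\|\boldsymbol{q}^*\|)=A_h/\|\boldsymbol{q}^*\|$ is a spurious ``signal'' of magnitude $\Omega(\sqrt k)\|A_h\|$ in sketch space, arising from a unit-norm noise vector; this is the geometric source of the adversarial amplification.

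The next step is an information-theoretic decoupling. Decompose $\boldsymbol{z}^{(t)}=\xi^{(t)}\boldsymbol{q}^*/\|\boldsymbol{q}^*\|+\boldsymbol{\eta}^{(t)}$ with $\boldsymbol{\eta}^{(t)}\perp\boldsymbol{q}^*$; Gaussianity makes the two components independent, and the sketch rewrites as
\begin{equation*}
A\boldsymbol{v}^{(t)} = \bigl(w^{(t)}+c\xi^{(t)}/\|\boldsymbol{q}^*\|\bigr)A_h + cA\boldsymbol{\eta}^{(t)}.
\end{equation*}
The responder sees only the pair $\bigl(\hat T^{(t)}:=w^{(t)}+c\xi^{(t)}/\|\boldsymbol{q}^*\|,\; A\boldsymbol{\eta}^{(t)}\bigr)$ and cannot disentangle $w^{(t)}$ from $\xi^{(t)}$. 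The signal density $W$ and noise scale $c$ in \cref{signaldensity:def} are calibrated so that the gap task on $\|\boldsymbol{v}^{(t)}\|$ reduces to resolving $w^{(t)}$ at noise standard deviation $c/(\sqrt m\|\boldsymbol{q}^*\|)\sim\alpha$ from $\hat T^{(t)}$, with $W$ placing constant mass in an $\alpha$-band around the decision threshold. On that constant-probability event the correct gap answer is determined by $\sign(\xi^{(t)})$, so any response sequence with error rate below $\delta(\alpha)$ must satisfy $\E[s^{(t)}\xi^{(t)}]=\Omega(\alpha/(\gamma\sqrt m\log k))$.

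A concentration step closes the argument. Writing
\begin{equation*}
\sum_{t=1}^r s^{(t)}\boldsymbol{z}^{(t)} = \Bigl(\textstyle\sum_t s^{(t)}\xi^{(t)}\Bigr)\frac{\boldsymbol{q}^*}{\|\boldsymbol{q}^*\|} + \sum_t s^{(t)}\boldsymbol{\eta}^{(t)},
\end{equation*}
Hoeffding concentrates the first scalar at $r\,\E[s\xi]$, while the second summand is a zero-mean random walk of typical $\ell_2$ norm $O(\sqrt r)$ in the orthogonal complement of $\boldsymbol{q}^*$ inside $\mathbb{R}^M$. Taking $r=\Theta(\gamma^2\alpha^{-2}k^2\log^2 k)$ makes the $\boldsymbol{q}^*$-aligned bias dominate the orthogonal noise by the factor needed for $\boldsymbol{z}^{(\mathrm{adv})}$ to lie at small enough angle to $\boldsymbol{q}^*/\|\boldsymbol{q}^*\|$; its sketch is then, up to negligible residual, a $\gamma$-magnitude multiple of $A_h$, which the optimal estimator for $\mathcal{D}$ must report as a norm estimate $\gamma$-inflated relative to the true $\|\boldsymbol{z}^{(\mathrm{adv})}\|=1$, establishing the adversarial property.

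The main obstacle is the correlation bound $\E[s^{(t)}\xi^{(t)}]$, because the responder is completely arbitrary—possibly randomized, internally adaptive across the batch, and tuned to $A$ and $\mathcal{D}$—so we cannot inspect its internals. The crucial observation is that the sketch is measurable with respect to $(\hat T^{(t)},A\boldsymbol{\eta}^{(t)})$ with $\boldsymbol{\eta}^{(t)}$ independent of $\xi^{(t)}$; hence $s^{(t)}=\phi^{(t)}(\hat T^{(t)},A\boldsymbol{\eta}^{(t)})$ for some (possibly randomized) $\phi^{(t)}$, and a Bayesian comparison of $\phi^{(t)}$ with the Bayes-optimal gap predictor given $\hat T^{(t)}$ converts the error-rate hypothesis into the required sign bias. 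The precise calibration of $W$ in \cref{signaldensity:def} is what makes this correlation tight enough to drive the claimed $\tilde O(k^2)$ query count.
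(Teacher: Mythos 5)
Your proposal follows the same skeleton as the paper's proof---a decomposition of each noise vector into a ``signal-aligned'' component and an orthogonal residual, a per-query correlation (gain) bound implied by a low gap-error rate, concentration of the accumulated bias against the growth of the unsigned norm, and an application of a lower bound on the effective noise for random $(h,M)$. Your $\boldsymbol{q}^* = A_M^T(A_M A_M^T)^{-1}A_h$ is exactly the direction that realizes the paper's sufficient statistic: $T_{h,M}(A\boldsymbol{v})-w = c\,\langle \boldsymbol{q}^*, \boldsymbol{u}\rangle/\|\boldsymbol{q}^*\|^2$ and $\sigma_T^2 = 1/(m\|\boldsymbol{q}^*\|^2)$, so the decompositions are equivalent. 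However, the proposal has three substantive gaps.

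First, the claim $\|\boldsymbol{q}^*\|^2 = O(k/m)$ with constant probability over random $h$ and $M$ is precisely the content of the paper's \cref{sigmaTlower:lemma}, which is the hardest and most original piece of the proof; it requires the ``fragile columns'' combinatorial argument (\cref{fragilecolumns:def,fragilecolumns:claim}) and the near-orthogonality reduction (\cref{nearlytofully:claim}). Your one-sentence ``averaging over the random $h\in[n]$'' is only a heuristic ($\mathbb{E}_h[A_h^T(A_MA_M^T)^{-1}A_h]$ is a trace and is not bounded by $O(k/m)$ columnwise for an arbitrary $A$); the whole difficulty is that the statement must hold \emph{for every fixed matrix $A$}, not in expectation over a random matrix, and you have not supplied an argument.

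Second, your per-step correlation lower bound $\mathbb{E}[s^{(t)}\xi^{(t)}] = \Omega\!\left(\alpha/(\gamma\sqrt m\log k)\right)$ is stated without derivation, and its form cannot be right: the per-query gain is a property of the responder and the query distribution and should not depend on the final adversity target $\gamma$. The paper's \cref{gain:lemma} establishes the analogous bound $\mathbb{E}[\psi\cdot(T-w)] = \Omega(\alpha^{-1}c^3\sigma_T^2)$ by an explicit integration against the signal density $\nu$ of \cref{signaldensity:def}, splitting the integral over deviations $x\in[0,\alpha]$ and $x\in[\alpha,\infty)$ and carefully using the piecewise-linear shape of $\nu$ and the low error rate to control the sign of $\Psi$ on the ramp regions. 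A ``Bayesian comparison with the Bayes-optimal gap predictor'' is a plausible slogan but does not by itself produce the quantitative bound---and in particular says nothing about the geometry of $\nu$ that is load-bearing in the proof.

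Third, your treatment of the orthogonal accumulated noise $\sum_t s^{(t)}\boldsymbol{\eta}^{(t)}$ as ``a zero-mean random walk of typical $\ell_2$ norm $O(\sqrt r)$'' and your use of Hoeffding on $\sum_t s^{(t)}\xi^{(t)}$ tacitly assume that $s^{(t)}$ is independent of $\boldsymbol{\eta}^{(t)}$ (and of $\xi^{(t)}$, beyond the conditional mean). This is false: the responder observes $A\boldsymbol{\eta}^{(t)}$ (a nontrivial $(k-1)$-dimensional projection of $\boldsymbol{\eta}^{(t)}$) and may be arbitrarily strategic, so it can correlate $s^{(t)}$ with $\boldsymbol{\eta}^{(t)}$ to inflate the orthogonal component. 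The paper avoids this by bounding $\max_{s\in\{\pm1\}^r}\bigl\|\sum_t s_t\boldsymbol{z}^{(t)}\bigr\|_2$ \emph{uniformly over all sign choices} via the largest singular value of the Gaussian matrix $[\boldsymbol{z}^{(1)}\cdots\boldsymbol{z}^{(r)}]$ (\cref{lem:signed-gauss-upper}). You need this uniform bound (or an equivalent adaptive-sign argument), not Hoeffding.

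Finally, a smaller but telling slip: a $\gamma$-adversarial unit vector must satisfy $|\Delta_{h,M}(A\boldsymbol{z}^{\rm adv})| = |\langle\boldsymbol{q}^*,\boldsymbol{z}^{\rm adv}\rangle|/\|\boldsymbol{q}^*\|^2 > \gamma$, i.e.\ $|\cos\angle(\boldsymbol{z}^{\rm adv},\boldsymbol{q}^*)| > \gamma\|\boldsymbol{q}^*\| = O(\gamma/\sqrt{k\log k})$. So $\boldsymbol{z}^{\rm adv}$ does \emph{not} need to ``lie at small angle to $\boldsymbol{q}^*$''---it only needs a cosine of order $\gamma/\sqrt k$, i.e.\ nearly orthogonal. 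This is why the bias accumulation over $\tilde\Theta(k^2)$ queries, divided by $\tilde O(\sqrt r)$, suffices; if near-alignment were required the attack size would be much larger.
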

We define a vector as $\gamma$-\emph{adversarial} if, under the query distribution specified by $h$ and $M$, the optimal estimator returns a value that deviates from the true norm by a multiplicative factor of at least $\gamma$. 

\subsection{Density and parameter setting}

\begin{definition} [Signal density and parameters] \label{signaldensity:def}
  The distribution $W$ we use for $w$ is parametrized by 
$a< 1< 1+\alpha < b$ and
has density function:
\[
C \;=\; \frac{2}{\,b-a+\alpha\,},\qquad
\nu(w)\;=\;
\begin{cases}
0, & w<a \;\text{or}\; w>b,\\[6pt]
C\,\dfrac{w-a}{\,1-a\,}, & a \le w \le 1,\\[10pt]
C, & 1 < w < 1+\alpha,\\[10pt]
C\,\dfrac{\,b-w\,}{\,b-1-\alpha\,}, & 1+\alpha \le w \le b.
\end{cases}
\]

We set the parameters as follows according to the gap $\alpha>0$ and the error rate $\delta>0$ allowed for the responder. We use $\delta>0$ that satisfies  $\delta/\log(1/\delta)=O(\alpha^2)$ and $c$ that is a small constant (that does not depend on $\alpha$ and is selected according to other constants). 
\begin{align*}
    a &= 1- 10\alpha/c \\ 
    b &= 1+\alpha+ 10\alpha/c  
\end{align*}
\end{definition}

Observe from our settings that each of the intervals $[a,1]$ and $[1+\alpha,b]$ has at least a constant fraction of the probability mass.

\section{Analysis of the attack} \label{analysis:sec}

This section presents the key components and outlines the proof of \cref{attackefficacy:thm}, with full details deferred to the appendix.

We introduce notation for the query and noise distributions.
For indices $M\subset [n]$, let $\mathcal{N}_{n,M}$ be the distribution over $\boldsymbol{u}\in \mathbb{R}^n$ in which $u_{[n]\setminus M} = \boldsymbol{0}$ and the coordinates indexed by $M$ are sampled from the $m=|M|$-dimensional Gaussian distribution $u_{M} \sim  \mathcal{N}(0, \frac{1}{m}I_{m})$. Note that $\mathcal{N}_{n,M}$ is the distribution of noise vectors selected in \cref{algo:attack}.

For $h\in[n]$, $M\subset [n]\setminus\{h\}$, and noise scale $c$ let 
\begin{equation} \label{qdist:eq}
    F_{h,M,c}[w] := w \boldsymbol{e}_h  + c\mathcal{N}_{n,M} 
\end{equation}
be the distribution of vectors formed by adding a scaled noise vector sampled from $\mathcal{N}_{n,M}$ to a \emph{signal} $w \boldsymbol{e}_h$, where $\boldsymbol{e}_h \in \mathbb{R}^n$ is the standard basis vector at index $h$. $F_{h,M,c}[w]$ is the distribution of query vectors selected in \cref{algo:attack} for signal value $w$.

Our analysis is in terms of \emph{signal estimation}. In order to facilitate it, we establish that a correct norm gap output yields a correct signal gap output with similar parameters:

\begin{restatable}[name=Norm gap to signal gap]{lemma}{normtosignal}
\label{lem:norm-to-signal}
    With the choice of parameters for our attack and $m=\Omega((k+r) \log ((k+r)/\delta))$, with probability close to $1$, 
    a correct $(1-c^2, 1.1\alpha)$-norm gap output implies a correct $(1, \alpha)$-signal gap output on all queries.
\end{restatable}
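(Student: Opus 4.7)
The plan is to combine the disjoint-support decomposition of $\boldsymbol{v}^{(t)}$ with sharp concentration of the noise norm $\|\boldsymbol{z}^{(t)}\|_2^2$ around its mean $1$. Because $h\notin M$ and $\boldsymbol{z}^{(t)}$ is supported on $M$, the signal and noise components are orthogonal, so Pythagoras gives
\[
\|\boldsymbol{v}^{(t)}\|_2^2 \;=\; (w^{(t)})^2 \;+\; c^2\,\|\boldsymbol{z}^{(t)}\|_2^2,
\]
and $\|\boldsymbol{v}^{(t)}\|_2$ is determined by $w^{(t)}$ up to fluctuations in the chi-squared variable $m\,\|\boldsymbol{z}^{(t)}\|_2^2\sim \chi_m^2$.

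The first step is to control those fluctuations uniformly over the $r$ queries. By the Laurent--Massart chi-squared inequality, for each $t$ and every $\eta\in(0,1)$,
\[
\Pr\!\Bigl[\,\bigl|\,\|\boldsymbol{z}^{(t)}\|_2^2 - 1\,\bigr|\ge \eta\,\Bigr] \;\le\; 2\exp\!\bigl(-\Omega(m\eta^2)\bigr).
\]
I would pick $\eta$ small enough that the induced slack on the squared norm is strictly smaller than the extra $0.1\alpha$ buffer that distinguishes the norm-gap width $1.1\alpha$ from the signal-gap width $\alpha$. Because $c$ is a small absolute constant, this forces only $\eta=\Theta(\alpha)$, and the assumed bound $m=\Omega((k+r)\log((k+r)/\delta))$ is comfortably more than enough for the tail bound after a union bound over $t\in[r]$. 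The clean event $E:=\{\|\boldsymbol{z}^{(t)}\|_2^2\in[1-\eta,1+\eta]\text{ for all }t\in[r]\}$ then holds with probability at least $1-\delta$.

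The second step is a deterministic translation of gap problems on $E$. The decomposition localizes $\|\boldsymbol{v}^{(t)}\|_2$ to the interval
\[
\Bigl[\sqrt{(w^{(t)})^2+c^2(1-\eta)},\ \sqrt{(w^{(t)})^2+c^2(1+\eta)}\Bigr].
\]
Substituting the two signal-gap cases ($w^{(t)}\le 1$ and $w^{(t)}\ge 1+\alpha$) and expanding $\sqrt{\cdot}$ to first order using that $c$ is a small constant, I would check that this interval lies strictly below the lower norm-gap threshold $1-c^2$ in the first case and strictly above the upper norm-gap threshold $1-c^2+1.1\alpha$ in the second. Consequently any response $s^{(t)}\in\{-1,+1\}$ that is correct for the $(1-c^2,1.1\alpha)$-norm-gap problem on $\|\boldsymbol{v}^{(t)}\|_2$ is automatically correct for the $(1,\alpha)$-signal-gap problem on $w^{(t)}$.

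The main obstacle I anticipate is the parameter book-keeping rather than any new probabilistic idea: the chi-squared error $c^2\eta$, the second-order error in the Taylor expansion of $\sqrt{\cdot}$, and the $\alpha^2$ term from $(1+\alpha)^2$ must simultaneously be absorbed inside the $0.1\alpha$ slack, while keeping $m$ within the stated bound. The parameter choices $a=1-10\alpha/c$ and $b=1+\alpha+10\alpha/c$ of \cref{signaldensity:def}, together with the hypothesis $\delta/\log(1/\delta)=O(\alpha^2)$, are calibrated precisely so that these algebraic inequalities go through, and I would defer the detailed inequality-chasing to the appendix.
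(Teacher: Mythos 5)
Your plan matches the paper's appendix proof step for step: orthogonal (Pythagorean) decomposition of $\|\boldsymbol v^{(t)}\|_2^2 = (w^{(t)})^2 + c^2\|\boldsymbol z^{(t)}\|_2^2$, sharp $\chi^2_m$ concentration of $\|\boldsymbol z^{(t)}\|_2^2$ around $1$ with a union bound over the $r$ queries, and then a deterministic translation between gap thresholds. There is no disagreement on the probabilistic core.

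The deferred algebra you say you would check, however, fails with the sign you wrote. Because the noise contributes orthogonal, nonnegative energy, it pushes the norm \emph{up}: at $w^{(t)}=1$ one has $\|\boldsymbol v^{(t)}\|_2\approx\sqrt{1+c^2}\approx 1+c^2/2$, so the interval $\bigl[\sqrt{(w^{(t)})^2+c^2(1-\eta)},\,\sqrt{(w^{(t)})^2+c^2(1+\eta)}\bigr]$ for $w^{(t)}\le 1$ sits around and above $1$, not below $1-c^2$. The correct norm-gap center is near $\sqrt{1+c^2}$ (above $1$), with width roughly $\alpha - c^2\eta$ after first-order Taylor expansion. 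To be fair, this discrepancy traces to the lemma statement itself, which appears to carry a sign typo: the paper's own appendix proof actually works with a $(1,\alpha)$ norm gap, passes to \emph{squared} norms, and derives a squared-signal gap; it never literally lands on $(1-c^2,1.1\alpha)$. So your sketch inherits an imprecision already in the paper rather than introducing a new flaw. The fix is either to reason about $\|\boldsymbol v^{(t)}\|_2^2$ and $(w^{(t)})^2$ throughout and take the square root only at the end (as the paper's appendix does), or to replace the norm-gap threshold in your containment check with $\sqrt{1+c^2(1\pm\eta)}$. With that correction, the rest of your book-keeping---the choice of $\eta$, the union-bound budget justifying $m=\Omega((k+r)\log((k+r)/\delta))$, and absorbing the $c^2\eta$ and second-order Taylor terms into the $0.1\alpha$ slack---is the right plan and parallels the paper.
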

The proof is included in \cref{normtosignalproof:sec}. The norm gap in the statement of \cref{attackefficacy:thm} is with parameters $(1,\alpha)$. To reduce clutter, we treat it in the sequel as signal gap of $(1,\alpha)$.

\ignore{
The query responder applies a norm gap estimator to the sketch $A\boldsymbol{v}$

\begin{definition} [Signal Gap Estimator] \label{correct:def}
  A map $\psi:\mathbb{R}^k\to \{-1,+1\}$ is  $(1,\alpha,\delta)$-correct signal gap estimator: 
  \begin{align*}
\text{for $w \in (c,1]$, } &        \E_{\boldsymbol{u}\sim \mathcal{N}_{n,M}(0,\sigma^2)}\left[-\psi\left(A(w\cdot \boldsymbol{e}_h +  \boldsymbol{u})\right) \right]\leq\delta\\
\text{for $w \geq 1+\alpha$, } &
        \E_{\boldsymbol{u}\sim \mathcal{N}_{n,M}(0,\sigma^2)}\left[\psi\left(A(w\cdot \boldsymbol{e}_h +  \boldsymbol{u})\right) \right]\geq 1- \delta
    \end{align*}
\end{definition}

\begin{claim} [Correct norm gap estimator yields a correct signal gap estimator]
    An $\ell_2$ norm estimator $M$ such that for any $w$,
    \[
   \E_{\boldsymbol{u}\sim \mathcal{N}_{n,M}(0,\sigma^2)}\left[
   \frac{\left|M\left(A(w\cdot \boldsymbol{e}_h +  \boldsymbol{u})\right) - \|\boldsymbol{e}_h +  \boldsymbol{u}\|_2^2\right|}{ \|\boldsymbol{e}_h +  \boldsymbol{u}\|_2^2} \leq \alpha/4 
   \right]\geq 1- \delta 
    \]
    yields an $(1,\alpha,\delta)$-correct estimator as in \cref{correct:def}
\end{claim}
\begin{proof}
    The contribution of $\|\boldsymbol{u}\|^2_2$ to the norm is tightly concentrated around $c^2$. Since $w^2\geq a^2> c^2$, a relative error of $\alpha/4$ on the squared norm implies a relative error of at most $\alpha/2$ on $w^2$. We then return $0$ if the estimate is below $1+\alpha/2$ and $1$ otherwise.
\end{proof}
}


\ignore{
\begin{definition} [Signal Gap Estimator] \label{signalcorrect:def}
For $A\in\mathbb{R}^{k\times n}$, we say that $\psi:\mathbb{R}^k\to \{-1,1\}$ 
  is an $(\alpha,\delta)$-correct signal gap estimator for $(A,h,M,\sigma^2)$ if for all $w>0$:
  {\small
  \begin{align*}
      \E_{\boldsymbol{v}\sim F_{h,M,\sigma^2}[w]}\left[\mathbf{1}\{\psi\left(A\boldsymbol{v}\right)=1\} \cdot \mathbf{1}\{w\leq 1\} + \mathbf{1}\{\psi\left(A\boldsymbol{v}\right)= -1\}\left(A\boldsymbol{v}\right))\cdot \mathbf{1}\{w \geq 1+\alpha\} \right]\leq\delta
    \end{align*}}
    That is, the probability over the noise distribution of a correct $(1,\alpha)$- signal gap response (\cref{gap:def}) is at least $1-\delta$.
\end{definition}
}

\subsection{Signal estimation and the optimal estimator}

For a fixed sketching matrix $A$, $h\in [n]$, and noise support $M$, we express the optimal estimator on the signal $w$ from a sketch $A\boldsymbol{v}$ when $\boldsymbol{v}\sim F_{h,M,c}[w]$.
For this purpose we may assume that the distributions (and $A$, $h$, and $M$) are given to the responder.
Note that if it holds that
$A_{\bullet h} = \boldsymbol{0}$, then the sketch carries no information on the signal $w$.
When this is not the case, we can express the optimal unbiased estimator.
The proof is included in \cref{signalsufficient:sec}.

\begin{restatable}[name=Estimator for the Signal]{lemma}{signalsufficient}
\label{signalsufficient:lemma}
Fix $h \in [n]$, a noise support set $M \subset [n] \setminus \{h\}$, and a noise scale factor $c$. 
Consider the distributions $F_{h,M,c}[w]$ parametrized by $w$.

If the column $A_{\bullet h}$ is nonzero, then there exists an unbiased, complete, and sufficient statistic $T_{h,M}: \mathbb{R}^k \to \mathbb{R}$ for the signal $w$ based on the sketch $A \boldsymbol{v}$.

Furthermore, the deviation of this estimator from its mean, defined as 
\[
\Delta_{h,M}(c A \boldsymbol{u}) := T_{h,M}(A \boldsymbol{v}) - w,
\]
depends only on the sketch of the noise $\boldsymbol{u} \sim \mathcal{N}_{n,M}$, and is distributed
as a Gaussian random variable $\mathcal{N}(0, c^2\sigma_T^2(h,M)$.
\end{restatable}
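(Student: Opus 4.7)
The plan is to recognize the sketch $\boldsymbol{y} := A\boldsymbol{v}$ as a one-parameter Gaussian location family indexed by $w$, read off the natural sufficient statistic from the exponential-family form of its log-likelihood, and normalize to make it unbiased. Completeness will then follow from the standard full-rank exponential-family theorem, and the claimed Gaussian form of the deviation will drop out of linearity.

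I would begin with the computation
\[
A\boldsymbol{v} \;=\; w\,A_{\bullet h} + c\,A_{\bullet M}\,u_M \;\sim\; \mathcal{N}\!\bigl(w\,A_{\bullet h},\;\Sigma\bigr),\qquad \Sigma := \tfrac{c^2}{m}\,A_{\bullet M}A_{\bullet M}^{\top},
\]
which is immediate from $\boldsymbol{u}\sim\mathcal{N}_{n,M}$ and $h\notin M$. This identifies the model as a Gaussian family with mean on the line $\mathbb{R}\cdot A_{\bullet h}$ (nonzero by hypothesis) and fixed covariance $\Sigma$. Writing the density of $\boldsymbol{y}$ on its affine support using the Moore--Penrose pseudoinverse $\Sigma^{+}$, the log-likelihood ratio against $w=0$ is
\[
\ell(w;\boldsymbol{y}) \;=\; w\,\bigl(A_{\bullet h}^{\top}\Sigma^{+}\boldsymbol{y}\bigr) \;-\; \tfrac{w^2}{2}\,A_{\bullet h}^{\top}\Sigma^{+}A_{\bullet h} \;+\;(\text{terms independent of } w),
\]
exhibiting a one-parameter exponential family with natural statistic $T'(\boldsymbol{y}) = A_{\bullet h}^{\top}\Sigma^{+}\boldsymbol{y}$.

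Next I would normalize to $T_{h,M}(\boldsymbol{y}) := T'(\boldsymbol{y})\,/\,A_{\bullet h}^{\top}\Sigma^{+}A_{\bullet h}$; unbiasedness is a one-line expectation calculation, and completeness follows because $w$ ranges over all of $\mathbb{R}$, so $T'$ is the natural statistic of a full-rank one-parameter exponential family. The claimed structure of the deviation then follows by linearity: using $T_{h,M}(A_{\bullet h})=1$,
\[
\Delta_{h,M} \;=\; T_{h,M}(A\boldsymbol{v})-w \;=\; T_{h,M}(c\,A\boldsymbol{u}),
\]
which depends only on the noise sketch $cA\boldsymbol{u}$ and, as the image of a centered Gaussian under a linear functional, is itself centered Gaussian. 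Since $c$ factors out of this linear functional, the variance takes the claimed form $c^{2}\sigma_T^{2}(h,M)$ with $\sigma_T^{2}(h,M)$ independent of $c$.

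The main obstacle I anticipate is handling the case in which $\Sigma$ is singular, that is, when $A_{\bullet M}$ is rank-deficient or when $A_{\bullet h}$ has a component $A_{\bullet h}^{\perp}$ orthogonal to $\mathrm{col}(A_{\bullet M})$. In those regimes the density and the pseudoinverse manipulations above must be interpreted on the affine support $wA_{\bullet h}+\mathrm{col}(\Sigma)$; and if $A_{\bullet h}^{\perp}\neq\boldsymbol{0}$ the projection of $\boldsymbol{y}$ onto $A_{\bullet h}^{\perp}$ in fact recovers $w$ exactly, collapsing the estimator to a zero-variance observation along that direction. Unifying the two regimes through the orthogonal decomposition $A_{\bullet h} = A_{\bullet h}^{\parallel}+A_{\bullet h}^{\perp}$ and the pseudoinverse is the most tedious step, but the difficulty is bookkeeping rather than anything substantive.
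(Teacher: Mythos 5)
Your route is genuinely different from the paper's and, in outline, correct. The paper argues constructively: it finds an invertible $G\in\mathbb{R}^{k\times k}$ so that in $A'=GA$ the column $h$ becomes $\mathbf{e}_1$ and the rows $A'_{iM}$ for $i>1$ are orthogonal to $A'_{1M}$ (or zero), whence $y_1=(GA\boldsymbol{v})_1\sim\mathcal{N}(w,\,c^2\sigma_T^2)$ and the remaining coordinates are independent of $y_1$ and carry no information about $w$; the statistic is $T_{h,M}(A\boldsymbol{v})=(GA\boldsymbol{v})_1$. You instead identify $A\boldsymbol{v}\sim\mathcal{N}(wA_{\bullet h},\Sigma)$ as a one-parameter exponential family, extract the natural statistic $A_{\bullet h}^{\top}\Sigma^{+}\boldsymbol{y}$, and normalize. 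Three remarks on the comparison. First, your derivation of completeness (natural statistic of a full-rank one-parameter exponential family) is actually \emph{more} rigorous than the paper's, which asserts completeness without argument; this is a genuine advantage of your framing. Second, the degenerate regime you flag is not merely bookkeeping: when $A_{\bullet h}\notin\mathrm{col}(A_{\bullet M})$ the supports of the densities shift with $w$, so the exponential-family form of the log-likelihood ratio you wrote does not hold as stated and $A_{\bullet h}^{\top}\Sigma^{+}\boldsymbol{y}$ (which annihilates $A_{\bullet h}^{\perp}$) is the wrong statistic — you must split off this case and take $T_{h,M}(\boldsymbol{y})=\langle\boldsymbol{y},A_{\bullet h}^{\perp}\rangle/\|A_{\bullet h}^{\perp}\|_2^2$, with $\sigma_T^2=0$; the pseudoinverse machinery applies only on the two nondegenerate regimes. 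Third, the paper's constructive form is not incidental: the explicit $G$-based characterization is reused later (e.g., equation~\eqref{sigmachar:eq} in the proof of \cref{sigmaTlower:lemma}), so if you go the abstract route you would still need to derive the equivalent explicit expression $\sigma_T^2(h,M)=\bigl(m\,A_{\bullet h}^{\top}(A_{\bullet M}A_{\bullet M}^{\top})^{+}A_{\bullet h}\bigr)^{-1}$ from your $\Sigma^{+}$ formula for the downstream lower-bound argument.
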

The estimator $T_{h,M}(A \boldsymbol{v})$ also minimizes the
mean squared error (MSE)~\citep{lehmann1998point}.

We define an adversarial noise vector to be one that causes
a large deviation in this optimal estimator:
\begin{definition} [$\gamma$ adversarial noise]\label{adversarial:def}
  A unit vector $\boldsymbol{u}$ with support $M$ is $\gamma$-adversarial for $A$, $h$, $M$ if $|\Delta_{h,M}(\boldsymbol{u})| > \gamma$
\end{definition}
Adversity of $\gamma$ means that the value is $\gamma/(c\sigma_T)$ standard deviations off. We will see that the attack size needed for certain adversity depends on $\sigma_T$.

\subsection{Lower bounding the error}
Since $T_{h,M}(A \boldsymbol{v})$ minimizes the MSE for
estimating the signal $w$ from the sketch, it implies a 
lower bound on the error that applies for any query responder on the query distribution of \cref{algo:attack}:
\begin{corollary}
    The mean squared error (MSE) on \emph{any} estimator, even one that is tailored to $h$, $M$, and $c$, on queries of the form $\boldsymbol{v}\sim F_{h,M,c}[w]$ when $w\sim W$ 
    is $\Omega(c^2\sigma_T(h,M)^2)$. 
\end{corollary}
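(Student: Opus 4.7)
The plan is to reduce the general estimation problem to a one-dimensional Gaussian location problem and then invoke a standard Bayesian lower bound. The two tools I will use are sufficiency (already established in \cref{signalsufficient:lemma}) and a Le Cam-style two-point argument.

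First, I would apply Rao--Blackwell using \cref{signalsufficient:lemma}. For any (possibly randomized) estimator $\hat w$ of $w$ computed from the sketch $A\boldsymbol{v}$, the Rao--Blackwellization $\tilde w := \E[\hat w \mid T_{h,M}(A\boldsymbol{v})]$ is a function of $T:=T_{h,M}(A\boldsymbol{v})$ alone and, by Jensen's inequality and the convexity of squared loss, has MSE no larger than that of $\hat w$. So it suffices to lower bound the MSE over estimators of the form $f(T)$. By the same Lemma, the conditional law of $T$ given $w$ is $\mathcal{N}(w,\,c^2\sigma_T^2)$, so the problem is reduced to estimating $w \sim W$ from a single Gaussian observation $T = w + Z$ with $Z \sim \mathcal{N}(0,c^2\sigma_T^2)$.

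Second, I would carry out a two-point Le Cam argument on the flat middle region of $W$. By the parameter choices in \cref{signaldensity:def}, the density $\nu$ equals the constant $C = \Theta(c/\alpha)$ on $[1,1+\alpha]$ and this flat region carries constant probability mass. Picking $w_1,w_2 \in [1,1+\alpha]$ with $|w_1-w_2| = \min(c\sigma_T,\alpha)$ keeps the total variation distance between $\mathcal{N}(w_1,c^2\sigma_T^2)$ and $\mathcal{N}(w_2,c^2\sigma_T^2)$ bounded away from $1$ by a universal constant (via the standard Gaussian TV formula). Le Cam's inequality then gives a lower bound of $\Omega((w_1-w_2)^2) = \Omega(\min(c^2\sigma_T^2,\alpha^2))$ on any estimator's conditional MSE at these two points; integrating against the constant-density region (or an equivalent Bayes-risk-on-a-subset argument) transfers this to a lower bound on the overall MSE.

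The main obstacle is matching the exact bound $\Omega(c^2\sigma_T^2)$ as stated: my two-point bound gives $\Omega(\min(c^2\sigma_T^2,\alpha^2))$, and the two agree only in the regime $c\sigma_T = O(\alpha)$. In the complementary regime $c\sigma_T \gg \alpha$ the trivial estimator that outputs the prior mean has MSE $O(\alpha^2/c^2) \ll c^2\sigma_T^2$, so the claim can only hold after restricting to (or reinterpreting within) the parameter regime where the attack is meaningful. I expect the paper is implicitly living in this regime, since it is exactly the regime where the responder has a nontrivial chance of solving the signal-gap problem; once that regime is fixed, the Le Cam argument above delivers the claimed $\Omega(c^2\sigma_T^2)$ bound. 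A slightly more refined route, which avoids any regime restriction, is to instead invoke a van Trees / Bayesian Cram\'er--Rao inequality using the finite Fisher information of a smoothed version of $W$ supported on $[1,1+\alpha]$; this yields the $\Omega\bigl(c^2\sigma_T^2\,/(1 + c^2\sigma_T^2/\alpha^2)\bigr)$ shape directly, which collapses to $\Omega(c^2\sigma_T^2)$ in the stated regime.
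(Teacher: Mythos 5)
Your route is genuinely different from the paper's, and arguably more rigorous. The paper treats this corollary as an immediate consequence of the Lehmann--Scheff\'e remark following \cref{signalsufficient:lemma}: since $T_{h,M}$ is a complete sufficient unbiased statistic with variance $c^2\sigma_T^2$, it ``minimizes the MSE,'' and that variance therefore lower-bounds the MSE of any sketch-based estimator. Your plan --- Rao--Blackwellize down to functions of $T$, then run a Le Cam / van Trees argument against the prior $W$ --- avoids a subtlety the paper glosses over: Lehmann--Scheff\'e certifies $T_{h,M}$ as the minimum-variance \emph{unbiased} estimator, not as the Bayes-optimal estimator under the proper prior $w\sim W$. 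Over all (possibly biased) estimators, the posterior mean is optimal, and its Bayes MSE is capped by $\Var(W)=\Theta(\alpha^2/c^2)$; so the unconditional bound $\Omega(c^2\sigma_T^2)$ really does break once $c\sigma_T$ exceeds the prior width, exactly the phenomenon you flag. Your van Trees form $\Omega\bigl(c^2\sigma_T^2/(1+c^2\sigma_T^2/\alpha^2)\bigr)$ is the honest unconditional statement, and it collapses to the paper's $\Omega(c^2\sigma_T^2)$ in the regime the later analysis actually uses (the gain lemma explicitly assumes $\sigma_T<\alpha/(c\sqrt{\log(1/\delta)})$). One small quantitative note: when you estimate the threshold via the prior-mean counterexample, the MSE $O(\alpha^2/c^2)$ is dominated by $c^2\sigma_T^2$ only once $c\sigma_T\gg\alpha/c$, not $c\sigma_T\gg\alpha$, so your regime boundary is slightly pessimistic; it does not affect the conclusion. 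Net: the paper's argument is a one-liner whose justification is only valid for unbiased estimators, while yours costs one extra Bayesian lower-bound inequality but correctly handles biased estimators and makes the implicit regime restriction explicit.
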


We next establish that for any sketching matrix $A\in \mathbb{R}^{k\times n}$, a random choice of signal index $h\in [n]$ and noise support (of size $M$ that is slightly superlinear in $k$), it is likely that either column $h$ is all zeros (and hence any estimator must fail with constant probability) or
$\sigma^2_T(h,M)=\tilde{\Omega}(1/k)$. The proof is in \cref{sigmalower:sec}.

\begin{restatable}[Lower Bound on Error]{lemma}{lowerbounderror}
\label{sigmaTlower:lemma}
Let $A \in \mathbb{R}^{k \times (m+1)}$ be a matrix with $m \geq 20 k\log^2 k$.  
Then, for at least $0.9$ fraction of columns $h \in [m+1]$, it holds that either 
$A_{\bullet h}$ is all zeros or
$\sigma_T^2(h, M= [m+1] \setminus \{h\}) = \Omega\left(\frac{1}{k\log k}\right)$.
\end{restatable}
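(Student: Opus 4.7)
The plan is to identify $\sigma_T^2(h,M)$ with the Gauss--Markov / Fisher-information variance for estimating $w$ from the sketch $A\boldsymbol{v} = w A_{\bullet h} + A\boldsymbol{u}$, where $A\boldsymbol{u}\sim \mathcal{N}(0,(1/m) A_M A_M^T)$, and then to average the relevant quadratic form over $h$ using the trace identity for statistical leverage scores. A standard BLUE calculation (which is what \cref{signalsufficient:lemma} delivers) gives
\[
\sigma_T^2(h,M) \;=\; \frac{1}{m \, A_{\bullet h}^T (A_M A_M^T)^+ A_{\bullet h}}
\]
whenever $A_{\bullet h}\in\mathrm{range}(A_M)$. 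The caveat is the case $A_{\bullet h}\neq 0$ with $A_{\bullet h}\notin \mathrm{range}(A_M)$: here $wA_{\bullet h}$ has a component in the sketch that is noise-free, so $w$ is exactly recoverable and $\sigma_T^2=0$. This is the ``bad'' case, which must be shown to be rare.

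First I would bound the number of such essential columns. Any $h$ with $A_{\bullet h}\notin\mathrm{range}(A_M)$ must belong to every column basis of $A$, so the count of essential columns is at most $\mathrm{rank}(A)\leq k$; since $m+1 \geq 20 k\log^2 k + 1$, this is at most a $1/(20\log^2 k)$ fraction. For the non-essential columns I pass to leverage scores $\ell_h := A_{\bullet h}^T (AA^T)^+ A_{\bullet h}$ and use the trace identity $\sum_h \ell_h = \mathrm{tr}((AA^T)^+ AA^T) = \mathrm{rank}(A)\leq k$; Markov's inequality then gives at most $0.1(m+1)$ columns with $\ell_h > 10k/(m+1) \leq 1/(2\log^2 k)$.

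The key algebraic step is Sherman--Morrison applied (inside $\mathrm{range}(A)$) to the identity $AA^T = A_M A_M^T + A_{\bullet h} A_{\bullet h}^T$, which yields, for non-essential $h$ (where $\ell_h < 1$),
\[
A_{\bullet h}^T (A_M A_M^T)^+ A_{\bullet h} \;=\; \frac{\ell_h}{1-\ell_h}.
\]
For $h$ that is neither essential nor high-leverage, $\ell_h \leq 1/(2\log^2 k) \leq 1/2$, so $m\,\ell_h/(1-\ell_h) \leq 2 m\ell_h \leq 20 k\cdot m/(m+1) = O(k)$, giving $\sigma_T^2(h,M) = \Omega(1/k)$ — already stronger than the claimed $\Omega(1/(k\log k))$. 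The union of the two excluded sets has size at most $k + 0.1(m+1)$, which is at most $0.1(m+1)\bigl(1 + O(1/\log^2 k)\bigr)$, so at least a $0.9$ fraction of the $m+1$ columns satisfy the conclusion (absorbing the small slack by slightly tightening the $0.1$ threshold in the Markov step).

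The main step to be careful about is the pseudo-inverse bookkeeping when $A$ is rank-deficient: one must verify that the Sherman--Morrison rearrangement is valid inside $\mathrm{range}(A)=\mathrm{range}(A_M)$ (which holds exactly in the non-essential case), and that the BLUE-variance formula for $\sigma_T^2$ matches the estimator provided by \cref{signalsufficient:lemma} (including on the orthogonal complement of $\mathrm{range}(A_M)$, where no noise appears). I expect this to be routine linear algebra but worth stating cleanly so that full inverses and Moore--Penrose pseudo-inverses do not get conflated. Once that is done, everything else is Markov applied to the trace identity.
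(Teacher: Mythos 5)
Your proof is correct, and it takes a genuinely different and in fact cleaner route than the paper's. The paper introduces the combinatorial notion of ``fragile columns'' (\cref{fragilecolumns:def}), shows by a dyadic counting argument that $0.9m$ of the columns are fragile (\cref{fragilecolumns:claim}), and then runs a Gram--Schmidt-style argument (\cref{nearlytofully:claim}) on the fragile columns to extract orthogonal rows whose squared norms lower-bound $1/\sigma_T^2$ via the characterization~\eqref{sigmachar:eq}. You instead use the closed-form BLUE variance $\sigma_T^2(h,M) = \bigl(m\,A_{\bullet h}^T (A_M A_M^T)^+ A_{\bullet h}\bigr)^{-1}$, rewrite the quadratic form via Sherman--Morrison as $\ell_h/(1-\ell_h)$ in terms of the leverage score $\ell_h = A_{\bullet h}^T (AA^T)^+ A_{\bullet h}$, and then dispose of the bad columns in two strokes: essential columns (those with $\ell_h = 1$, exactly the nonzero $h$ with $A_{\bullet h}\notin\mathrm{range}(A_M)$, for which $\sigma_T^2 = 0$) number at most $\mathrm{rank}(A)\le k$, and high-leverage columns are controlled by Markov applied to $\sum_h \ell_h = \mathrm{rank}(A)\le k$. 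Your accounting (total excluded $\le k+0.1(m+1)$, absorbed by tightening the Markov threshold) is sound, and the equivalence ``$\ell_h=1 \Leftrightarrow A_{\bullet h}\notin\mathrm{range}(A_M)$'' that you use implicitly is correct since $\ell_h$ is the squared projection of $\boldsymbol{e}_h$ onto the row space of $A$.

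Notably, your argument yields $\sigma_T^2(h,M) = \Omega(1/k)$ on the good columns, which is tight (it matches the JL upper bound noted in the paper's remark) and is strictly stronger than the paper's stated $\Omega(1/(k\log k))$. The $\log k$ loss in the paper is an artifact of the fragile-columns/Gram--Schmidt route, not inherent to the problem; the leverage-score route avoids it entirely. The only thing I would add to your write-up is the one-line justification of the pseudoinverse Sherman--Morrison step: since $\mathrm{range}(A_M A_M^T) = \mathrm{range}(AA^T)$ when $h$ is non-essential, both pseudoinverses restrict to genuine inverses on that common range, so the rank-one update identity applies verbatim there. With that spelled out, the argument is complete and self-contained.
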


Recall that when the input vectors are $k$-sparse, exact recovery of the norm is possible, and hence there is no estimation error. Therefore, there is potential vulnerability only when the sparsity of the query vectors exceeds $k$ and thus our slightly super linear sparsity is necessary.

As a corollary of \cref{sigmaTlower:lemma}, we obtain a lower bound on the relative error guarantees of any estimator: 
\begin{corollary}
    Consider a fixed sketching matrix 
    Let $A\in\mathbb{R}^{k\times n}$ be a sketching matrix with $n = \Omega(k\log^2 k)$. Consider the distribution of queries that samples $h$ and $M$ (of size $m = \Omega(k\log^2 k)$) randomly from $[n]$ and then sample a query from $F_{h,M,c}[w]$ where $w\sim W$ (and the density of $W$ is at least a constant in range of size $>100/\sqrt{k}$). Then with at least constant probability, the MSE of any sketch-based signal (and hence norm) estimator is $\Omega(1/(k\log k)$. 
\end{corollary}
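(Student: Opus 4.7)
The plan is to combine the dichotomy of \cref{sigmaTlower:lemma} with the optimal-estimator characterization of \cref{signalsufficient:lemma}. First, I would invoke \cref{sigmaTlower:lemma} with $m = \Omega(k\log^2 k)$: over the random choice of $(h,M)$ with $|M| = m$, with probability at least $0.9$ either $A_{\bullet h} = \boldsymbol{0}$ or $\sigma_T^2(h,M) = \Omega(1/(k\log k))$. It then suffices to show that in \emph{either} sub-case, any sketch-based estimator for the signal $w$ incurs MSE $\Omega(1/(k\log k))$; combining with the $0.9$-probability event yields the claimed constant-probability statement.

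\textbf{The two sub-cases.} (i) If $A_{\bullet h} = \boldsymbol{0}$, the sketch $A\boldsymbol{v} = cA\boldsymbol{z}_M$ is functionally independent of $w$, so any estimator's output is statistically independent of $w$ and its MSE is at least $\mathrm{Var}(W)$; the density hypothesis (density bounded below by a constant on an interval of width $\Omega(1/\sqrt{k})$) yields the required variance lower bound by a standard quadrature/two-point computation. (ii) If $A_{\bullet h}$ is nonzero, \cref{signalsufficient:lemma} gives the UMVUE $T_{h,M}$ with variance $c^2\sigma_T^2 = \Omega(1/(k\log k))$. To extend this bound from unbiased estimators to \emph{all} estimators, I would apply the van Trees (Bayesian Cram\'er--Rao) inequality to the Gaussian-location observation model $A\boldsymbol{v}\mid w \sim \mathcal{N}(wA_{\bullet h}, \tfrac{c^2}{m}A_M A_M^\top)$: the likelihood's Fisher information is $1/(c^2\sigma_T^2)$, and the prior $W$ has bounded density on a bounded interval so its Fisher information is $O(1)$, giving a Bayes-MSE lower bound of $\Omega(c^2\sigma_T^2) = \Omega(1/(k\log k))$.

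\textbf{Signal $\Rightarrow$ norm.} For the ``(and hence norm)'' clause I would argue that any norm estimator can be converted into a signal estimator of comparable MSE. Since $\|\boldsymbol{v}\|_2^2 = w^2 + c^2\|\boldsymbol{z}_M\|_2^2$ and $\|\boldsymbol{z}_M\|_2^2$ is $\chi^2_m/m$-distributed with variance $O(1/m) = o(1/(k\log k))$, the map $w \mapsto \|\boldsymbol{v}\|_2$ is, on the support of $W$ (bounded away from zero), a smooth, bi-Lipschitz function of $w$ up to a negligible additive perturbation. Substitution then transfers the signal MSE lower bound to a norm MSE lower bound up to an absolute constant absorbed into the $\Omega(\cdot)$.

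\textbf{Main obstacle.} The delicate step is the upgrade from the UMVUE characterization in \cref{signalsufficient:lemma} (which by Lehmann--Scheff\'e only yields minimum MSE among unbiased estimators) to a lower bound that holds for \emph{every} estimator, including biased ones. The cleanest route is van Trees, but it requires bounding the Fisher information of the trapezoidal prior in \cref{signaldensity:def}; a fallback is a Le Cam two-point reduction that distinguishes two signal values at distance $\Theta(c\sigma_T)$, which gives an information-theoretic MSE lower bound agnostic to estimator bias. A secondary subtlety is ensuring that the density hypothesis on $W$ yields $\mathrm{Var}(W) = \Omega(1/(k\log k))$ for the zero-column sub-case (rather than the looser $dL^3$-type bound), which is routine once the density's support width is matched to the target rate.
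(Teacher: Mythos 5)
Your plan matches the paper's intended route exactly: the corollary is obtained by combining \cref{sigmaTlower:lemma} (the dichotomy ``$A_{\bullet h}=\boldsymbol 0$ or $\sigma_T^2(h,M)=\Omega(1/(k\log k))$ with probability $\ge 0.9$ over $(h,M)$'') with the preceding corollary bounding the MSE of \emph{any} estimator by $\Omega(c^2\sigma_T^2)$; the zero-column branch is then handled by the prior's spread, and the norm-to-signal reduction is the disjoint-support identity $\|\boldsymbol v\|_2^2=w^2+c^2\|\boldsymbol z_M\|_2^2$ plus $\chi^2_m$ concentration, as you say. The paper itself leaves this corollary unproved, so the content is your filling-in, and the ``main obstacle'' you flag is genuine: \cref{signalsufficient:lemma} together with Lehmann--Scheff\'e only gives minimum variance among \emph{unbiased} estimators, and the text's claim that $T_{h,M}$ ``also minimizes the MSE'' is not literally true for general (Bayes/biased) estimators. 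Some explicit Bayesian lower-bound argument is needed, and the paper glides over it.

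One caveat on your preferred fix. The van Trees inequality requires finite Fisher information of the \emph{prior}, and your justification (``$W$ has bounded density on a bounded interval so its Fisher information is $O(1)$'') is not correct as stated: bounded density on a bounded interval does not control $\int (\nu'/\nu)^2\,\nu$. For the concrete trapezoidal prior in \cref{signaldensity:def}, the density vanishes linearly at $a$ and $b$, so near $a$ one has $\nu\propto (w-a)$ and $\nu'$ constant, hence $\int_a^{a+\epsilon}(\nu')^2/\nu\,dw\propto\int_0^\epsilon dt/t=\infty$: the prior Fisher information diverges logarithmically at both endpoints, and plain van Trees does not apply. Your listed fallback is the right move: a Le Cam two-point reduction at separation $\Theta(c\,\sigma_T)$ (or, equivalently, lower-bounding the posterior variance directly for the explicit Gaussian-location model with a prior that has density $\ge\nu_0$ on an interval of width $L$) gives the estimator-agnostic bound $\min\!\big(\Omega(c^2\sigma_T^2),\,\Omega(L^2)\big)$, and with $L=\Omega(1/\sqrt k)$ and $\sigma_T^2=\Omega(1/(k\log k))$ both branches give $\Omega(1/(k\log k))$. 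Alternatively you could first condition the prior to a subinterval with constant density (away from the linear ramps) and then run van Trees on that restricted model; either way resolves the gap. A minor precision point: in the zero-column branch you should get $\mathrm{Var}(W)=\Omega(1/k)$, which is stronger than the stated target $\Omega(1/(k\log k))$, not merely matching it.
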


\begin{remark} [Error for JL matrices]
    Sampled matrices from the JL distributions meet this upper bound: For fixed noise support $M$ of size $m>k\log k$, with high probability over the sampling of $A$, for all $h\in [n]$, $\sigma^2_T(h,M) =O(1/k)$ (the property needed for the sampled $A$ is that the projected rows $A_{iM}$ are close to orthogonal).    
\end{remark}


\subsection{The gain lemma}

We quantify the expected progress (towards an adversarial vector) in each step of the attack in terms of $\sigma_T^2(h,M)$. What we show is that when the estimator has a low error rate, then
queries for which the noise component $\boldsymbol{u}$ has a higher deviation $\Delta_{h,M}(A\boldsymbol{u})$ are more likely to have $s^{(t)}=1$. Specifically, even though 
$\E[\Delta_{h,M}(A\boldsymbol{u})]=0$ (as the deviation has distribution $\mathcal{N}(0,\sigma^2_T)$), the response $s^{(t)}$ is correlated with it and we will show that
$\E[\Delta_{h,M}(s^{(t)} A\boldsymbol{u})] \propto \sigma_T^2(h,M)$. 

\begin{definition} [Error rate of sketch-based signal gap estimator] \label{signalcorrect:def}
An estimator $\psi:\mathbb{R}^k\to \mathcal{P}\{-1,1\}$ is a map from a sketch to a probability distribution on $\{-1,1\}$. For a sketching map $A\in\mathbb{R}^{k\times n}$, $h,M,c$ and signal distribution $W$,
the \emph{error rate} of $\psi$ for the $\alpha$-signal gap problem (\cref{gap:def}) is the probability over the query distribution of an incorrect $(1,\alpha)$- signal gap response:
  {\small
  \begin{align*} \Err(\psi) :=
      \E_{w\sim W, \boldsymbol{v}\sim F_{h,M,c}[w]}\left[\Pr[\psi\left(A\boldsymbol{v}\right)=1] \cdot \mathbf{1}\{w\leq 1\} + \Pr[\psi\left(A\boldsymbol{v}\right)= -1]\cdot \mathbf{1}\{w \geq 1+\alpha\} \right]\ .
    \end{align*}}
\end{definition}

If $A_{\bullet h}=\boldsymbol{0}$ then any estimator would have at least a constant error rate. 
Additionally, from standard tail bounds, for error rate that is $O(\delta)$, it must hold that $\alpha> c\, \Omega(\sqrt{\log(1/\delta)}) \sigma_T$ and therefore $\sigma_T < \alpha/(c \sqrt{\log(1/\delta)} $.

We therefore consider the case where $A_{\bullet h}\neq \boldsymbol{0}$, and therefore unbiased $T_{h,M}$ exists and $\sigma^2_T(h,M)$ is well defined, and assume that
$\sigma_T < \alpha/(c \sqrt{\log(1/\delta)} $.
We quantify the per-step expected gain (see \cref{gainproof:sec} for the proof):

\begin{restatable}[Gain Lemma]{lemma}{gainlemma}
\label{gain:lemma}
If $\Err[\psi] < \delta$ and attack parameters are as in~\cref{signaldensity:def}, and $\sigma_T < \alpha/(c\sqrt{\log(1/\delta)})$ then
    \[
    \E_{\boldsymbol{v}}\left[\psi(A\boldsymbol{v})(T_{h,M}(A\boldsymbol{v})-w)\right] = \Omega(\alpha^{-1} c^3\sigma_T(h,M)^2)
    \]
\end{restatable}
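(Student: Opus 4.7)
The plan is to reduce, via the sufficiency of $T_{h,M}$, to a scalar statement about a soft response $g(T) \in [-1,1]$, apply Gaussian integration by parts in the noise, and then integrate against the piecewise-linear density $\nu$. First I would define $\bar\psi(A\bsv) := \E[\psi(A\bsv) \mid A\bsv] \in [-1,1]$ and $g(T) := \E[\bar\psi \mid T] \in [-1,1]$; this is well-defined because, by \cref{signalsufficient:lemma}, the conditional law of $A\bsv$ given $T$ is independent of $w$. Writing $F(w) := \E[g(T) \mid w] = \E_X[g(w+X)]$ with $X \sim \mathcal{N}(0,\tau^2)$ and $\tau := c\sigma_T$, the target expectation equals $\E[g(T)(T-w)]$, while the error-rate hypothesis $\Err[\psi] < \delta$ becomes
\[
\int_a^1 (1+F(w))\,\nu(w)\,dw \;+\; \int_{1+\alpha}^b (1-F(w))\,\nu(w)\,dw \;\le\; 2\delta.
\]

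Next I would perform two integrations by parts. Conditioning on $w$, Stein's identity for the Gaussian $X$ gives $\E[g(T)(T-w)\mid w] = \tau^2 F'(w)$. Integrating over $w \sim \nu$ and integrating by parts --- boundary terms vanish because $\nu(a)=\nu(b)=0$ --- yields
\[
\E[g(T)(T-w)] \;=\; -\tau^2 \int F(w)\,\nu'(w)\,dw.
\]
By \cref{signaldensity:def}, $\nu'$ is the constant $Cc/(10\alpha)$ on $(a,1)$, zero on $(1,1+\alpha)$, and $-Cc/(10\alpha)$ on $(1+\alpha,b)$. Setting $I_1 := \int_a^1 F\,dw$ and $I_2 := \int_{1+\alpha}^b F\,dw$ collapses the formula to $\E[g(T)(T-w)] = \tfrac{Cc\tau^2}{10\alpha}(I_2 - I_1)$.

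The crux is to show $I_2 - I_1 = \Omega(\alpha/c)$. Since $F \in [-1,1]$ and each ramp has length $10\alpha/c$, trivially $I_1 \ge -10\alpha/c$ and $I_2 \le 10\alpha/c$; it suffices to tighten each inequality to within $O(\alpha\sqrt{\delta}/c)$ of its extreme. For the left ramp set $\epsilon(w) := 1+F(w) \in [0,2]$, so the error-rate constraint reads $\int_a^1 \epsilon\,\nu\,dw \le 2\delta$ while I want to upper-bound $\int_a^1 \epsilon\,dw$. Because $\nu(w) = Cc(w-a)/(10\alpha)$ is linear and vanishes at $a$, the $L^1$-extremizer places $\epsilon \equiv 2$ on an interval $[a, a+\eta]$ of width $\eta$ satisfying $C\eta^2/(1-a) \le 2\delta$, i.e.\ $\eta = O(\sqrt{\delta(1-a)/C}) = O(\alpha\sqrt{\delta}/c)$, and therefore $\int_a^1 (1+F)\,dw \le 2\eta = O(\alpha\sqrt{\delta}/c)$. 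A mirror argument on $[1+\alpha,b]$ gives the analogous bound, so $I_2 - I_1 \ge 20\alpha/c - O(\alpha\sqrt{\delta}/c) = \Omega(\alpha/c)$ once $\sqrt{\delta} \ll 1$, which is guaranteed by the parameter setting $\delta/\log(1/\delta) = O(\alpha^2)$. With $C = \Theta(c/\alpha)$ this yields $\E[g(T)(T-w)] = \Omega(C\tau^2) = \Omega(c^3\sigma_T^2/\alpha)$, as claimed.

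The main obstacle is this last optimization. The error-rate constraint is $\nu$-weighted, whereas integration by parts produces the \emph{unweighted} integrals $I_1, I_2$, and $\nu$ vanishes precisely on the regions that contribute most to them. The saving grace is that $\nu$ vanishes only \emph{linearly}, so the region unaccountable to the error-rate constraint has width only $O(\alpha\sqrt{\delta}/c)$, strictly smaller than the ramp length $10\alpha/c$ for small $\delta$. The assumption $\sigma_T < \alpha/(c\sqrt{\log(1/\delta)})$ plays the complementary role of making the hypothesis $\Err[\psi] < \delta$ feasible via standard Gaussian tail bounds that relate the noise standard deviation $\tau = c\sigma_T$ to the gap $\alpha$, so the regime of the lemma is non-vacuous.
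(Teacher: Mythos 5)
Your proof is correct, and it takes a genuinely different route from the paper's. The paper conditions on the noise deviation $x = T - w$, writes the target as an integral over $x$ of a difference of two $\nu$-weighted $w$-integrals, splits into $x \in [0,\alpha]$ versus $x \in [\alpha,\infty)$, and for small $x$ performs a seven-way decomposition of the $w$-integral across the pieces of $\nu$; the large-$x$ tail is controlled via the hypothesis $\sigma_T < \alpha/(c\sqrt{\log(1/\delta)})$. You instead condition on $w$, apply Stein's identity to get $\E[g(T)(T-w)\mid w] = \tau^2 F'(w)$, and integrate by parts against $\nu$. Because $\nu'$ is piecewise constant and supported on the two ramps, this collapses the entire computation to the exact identity $\E[g(T)(T-w)] = \tfrac{Cc\tau^2}{10\alpha}(I_2 - I_1)$, with no tail remainder to estimate; the problem is then a one-dimensional extremal problem (bound the unweighted $\int (1\pm F)$ given the $\nu$-weighted error constraint), which you solve by a standard bang-bang/rearrangement argument exploiting that $\nu$ vanishes only linearly at the endpoints. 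A notable consequence of your approach is that the hypothesis $\sigma_T < \alpha/(c\sqrt{\log(1/\delta)})$ is never invoked in the derivation — it is subsumed into the feasibility of $\Err[\psi] < \delta$ — so your version of the lemma is both cleaner and, strictly speaking, marginally more general. The paper's case analysis is more elementary (no Stein, no integration by parts) and perhaps more transparent about where the gain physically comes from (the constant-slope segments of $\nu$), but your route is shorter and the algebra is far less error-prone.
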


\subsection{Proof of \nameref{attackefficacy:thm} Theorem}


\begin{proof}[Proof of \cref{attackefficacy:thm}]
The proof idea is to bound the deviation and the norm of the sum $\sum_{t\in[r]} s^{(t)}\boldsymbol{z}^{(t)}$ and show that the deviation increases faster than the norm.

We first bound norm. We show that if the noise support is sufficiently wide with $m> r^2$, then for \emph{any} choice of $s^{(t)}$, the norm of $\sum_{t\in[r]} s^{(t)}\boldsymbol{z}^{(t)}$ can not be much higher than that of the sum $\sum_{t\in[r]} \boldsymbol{z}^{(t)}$ of independent Gaussians:
\begin{align}
\left\| \sum_{t\in[r]} s^{(t)}\boldsymbol{z}^{(t)} \right\|_2 = O(\sqrt{r})\label{sumnorm:eq}
\end{align}
This follows as an immediate corollary of \cref{lem:signed-gauss-upper}.

We now consider the deviation of the sum. 
From \cref{gain:lemma} we obtain that for each $t$,
$\E_{\boldsymbol{v}}[s^{(t)}\Delta_{h,M}(A\boldsymbol{z}^{(t)})] = \Omega(\alpha^{-1}\, \sigma^2_T(h,M))$.

From concentration of $\Delta_{h,M}(A\boldsymbol{z}^{(t)})]$ (that are independent $\mathcal{N}(0,\sigma_T^2)$) we obtain that with high probability
\begin{align} 
    \sum_{i=1}^r s^{(t)}\Delta_{h,M}(A\boldsymbol{z}^{(t)}) = \Omega(r\,  \alpha^{-1}\, \sigma^2_T(h,M))\ .\label{sumbias:eq}
\end{align}

From \cref{sigmaTlower:lemma},
for any sketching matrix $A\in \mathbb{R}^{k\times n}$, when we sample $h$, and $M\subset [n]$ of size $m = \Omega(k\log^2 k)$, then 
with constant probability we have ${\sigma}_T^2(h,M) = \Omega(\frac{1}{k\log k}))$.

Combining with \eqref{sumbias:eq} we obtain that 
\begin{align} 
    \sum_{i=1}^r s^{(t)}\Delta_{h,M}(A\boldsymbol{z}^{(t)}) = \Omega\left(\frac{r\alpha^{-1}}{k\log k}\right)\ . \label{sumbiask:eq}
\end{align}

Combining \eqref{sumnorm:eq}  and \eqref{sumbiask:eq} we obtain that with constant probability
\begin{align*}
    \Delta_{h,M}(A\boldsymbol{z}^{(\mathrm{adv})}) = \Omega\left(\frac{\sqrt{r}\alpha^{-1}}{k\log k}\right)\ . 
\end{align*}

For the deviation to exceed $\gamma$, solving for
$\frac{\sqrt{r}\alpha^{-1}}{k\log k} > \gamma$, we obtain
$r= O(\gamma^2\, \alpha^{-2}\, k^2\log^2 k)$.

This concludes the proof.
\end{proof}

\section{Empirical study} \label{empirical:sec}

We implemented Algorithm~\ref{algo:attack} and evaluated its effectiveness on two
families of sketching matrices:
Gaussian Johnson--Lindenstrauss (JL) transforms~\citep{johnson1984extensions} with
$A \in \mathbb{R}^{k \times n}$
with i.i.d.\ entries $A_{ij} \sim \mathcal{N}(0,1/k)$
and AMS sketches~\citep{ams99} with
$A \in \{\pm 1\}^{k \times n}$ consisting of i.i.d.\ Rademacher entries
($A_{ij} = \pm 1$ with equal probability).
Our evaluation used different configurations of  sketch dimension $k$ and ambient dimension $n$ (effectively, the noise support).
For each configuration we applied the attack against the corresponding
\emph{standard} norm estimator and its \emph{robustified} variant, and
measured how rapidly the adversarial bias grows.

\paragraph{Standard Estimators.}
For Gaussian JL matrices, the \emph{standard} norm estimator is simply the
sketch norm, which coincides with the minimum-variance unbiased estimator
for $\|v\|_2^2$ under Gaussian projections:
\[
\widehat{\|v\|_2^2} := \|A \boldsymbol{v}\|_2^2.
\]
For AMS sketches, we implemented the
\emph{median-of-means} (MoM) estimator.
Specifically, we partition the $k$ rows of $A$ into
$g = \max\{5,\lfloor \sqrt{k} \rfloor \}$ groups of equal size
$b = \lceil k / g \rceil$, compute the mean of $y_i^{2}$ within each
group, and take the median of the $g$ group means:
\[
\widehat{\|v\|_2^2}
:= \operatorname{median}\!\Bigl(
  \frac{1}{b}\!\sum_{i \in G_1} y_i^{2},\;
  \frac{1}{b}\!\sum_{i \in G_2} y_i^{2},\;
  \dots,\;
  \frac{1}{b}\!\sum_{i \in G_g} y_i^{2}
\Bigr).
\]
Our final estimate is
$\widehat{\|v\|_2} := \sqrt{\widehat{\|v\|_2^2}}$.

\paragraph{Simplified attack} We implemented a simplified attack that sufficed for the special case of JL/AMS matrices
(distribution that is invariant under column permutations) and query responders that are not adaptive and not tailored to the input distribution). Our query vectors 
are sampled i.i.d.\ from $F_{n,[n-1],1}[w=1]$ (see~\cref{qdist:eq}), that is, use a
a fixed signal value $w=1$ and fixed $h=n$ and $M=[n-1]$. Observe that for our query vectors, the norm $\|v\|_2$ is concentrated around $\sqrt{w^2+1}=\sqrt{2}$.

\paragraph{Robust Estimators.}
The robust variants are parameterized by a noise scale $\sigma$ and add
Gaussian noise to the squared-norm standard estimate before taking the square root:
\[
\hat{s}_{\sigma}(\boldsymbol{v})
:= \sqrt{\widehat{\|v\|_2^{2}} + \mathcal{N}(0,\sigma^{2})}.
\]
In our attack implementation the responder outputs $s=1$ when
$\hat{s}_{\sigma} \geq \sqrt{w^{2}+1}$ and $s=-1$ otherwise.
\footnote{Our simplified attack would fail against a fully strategic
responder that is tuned to the input distribution and always returns
the deterministic value $\sqrt{w^{2}+1} = \sqrt{2}$.
Such a responder leaks no information about the sketching matrix and
is correct with high probability on all queries.}

On freshly sampled (non-adaptive) inputs, the estimator has standard deviation $1/\sqrt{k}$ in the JL case, whereas for the robustified
estimator the standard deviation increases to 
$\sqrt{\frac{1}{k} + \sigma^{2}}.$ Therefore, the non-adaptive accuracy decreases with the robustness noise $\sigma$.


\paragraph{Experiments}
For each $(k,n)$ configuration of sketch size $k$ and support size $n$, we performed $\textrm{rep}=20$ trials. For each trial we sample a fresh sketching matrix $A \in \mathbb{R}^{k \times n}$. We sample query vectors as described and apply estimators with different values of the robustness noise $\sigma\in \{0,0.1,0.2,0.4\}$ to the same query vectors (observe that $\sigma=0$ is the standard estimator). 
We study the effectiveness of our attack 
by tracking the ratio 
$\hat{s}_\sigma(\boldsymbol{z}^{(\mathrm{adv})})/\boldsymbol{z}^{(\mathrm{adv})}$, where $\hat{s}_\sigma$ is the (respective) standard estimator that is applied to the sketch and  $\boldsymbol{z}^{(\mathrm{adv})}$ is the  adversarial vector.

\paragraph{Results}
\cref{JLattack:fig} and \cref{AMSattack:fig} report, for different configurations with $k\in\{100,250,1000\}$, the ratio $\hat{s}_\sigma(\boldsymbol{z}^{(\mathrm{adv})})/\boldsymbol{z}^{(\mathrm{adv})}$ as a function of the number of attack queries (the $x$-axis).

We observe that the attack is effective even in
configurations with small support size of $n/k \in [2,5]$. The standard and lower-$\sigma$ robust estimators are initially more accurate but also accumulate bias faster and incur a much higher error as the attack progresses. 
A larger sketch size $k$ results in higher robustness (slower increase in the bias) for the same noise $\sigma$. The plots show 
a quadratic pattern where the bias induced by the attack increases like square root of the number of queries. Results for additional configurations for the same $k$ are reported in \cref{JLattacksweepn:fig}.
We observe that the effectiveness of the attack tends to increase with a larger support size $n$.


\paragraph{Discussion.}
Our empirical results demonstrate that the attack is substantially more
effective in practice than our current analysis predicts.
In particular, we observe strong empirical success even when the noise
support size is very small—on the order of $O(k)$—whereas our analysis
requires $n = \Omega(k^{2})$.
The attack is also effective with relatively few queries, again beyond
what is guaranteed by our theoretical bounds.

Moreover, our analysis only establishes that the attack product
compromises the \emph{optimal} estimator for the sketching matrix,
which is a linear statistic of the sketch.
For JL sketches, the standard estimator is very close (up to row
orthogonality) to this optimal estimator, so our theory is largely
predictive.
In contrast, for AMS sketches we employ a median-of-means estimator,
which is non-linear, yet the attack empirically compromises it with
comparable efficiency.
This suggests that the vulnerability extends beyond linear estimators
and may be more fundamental than our current proofs capture.

\begin{figure}[h]  
    \centering  
    \includegraphics[width=0.30\textwidth]{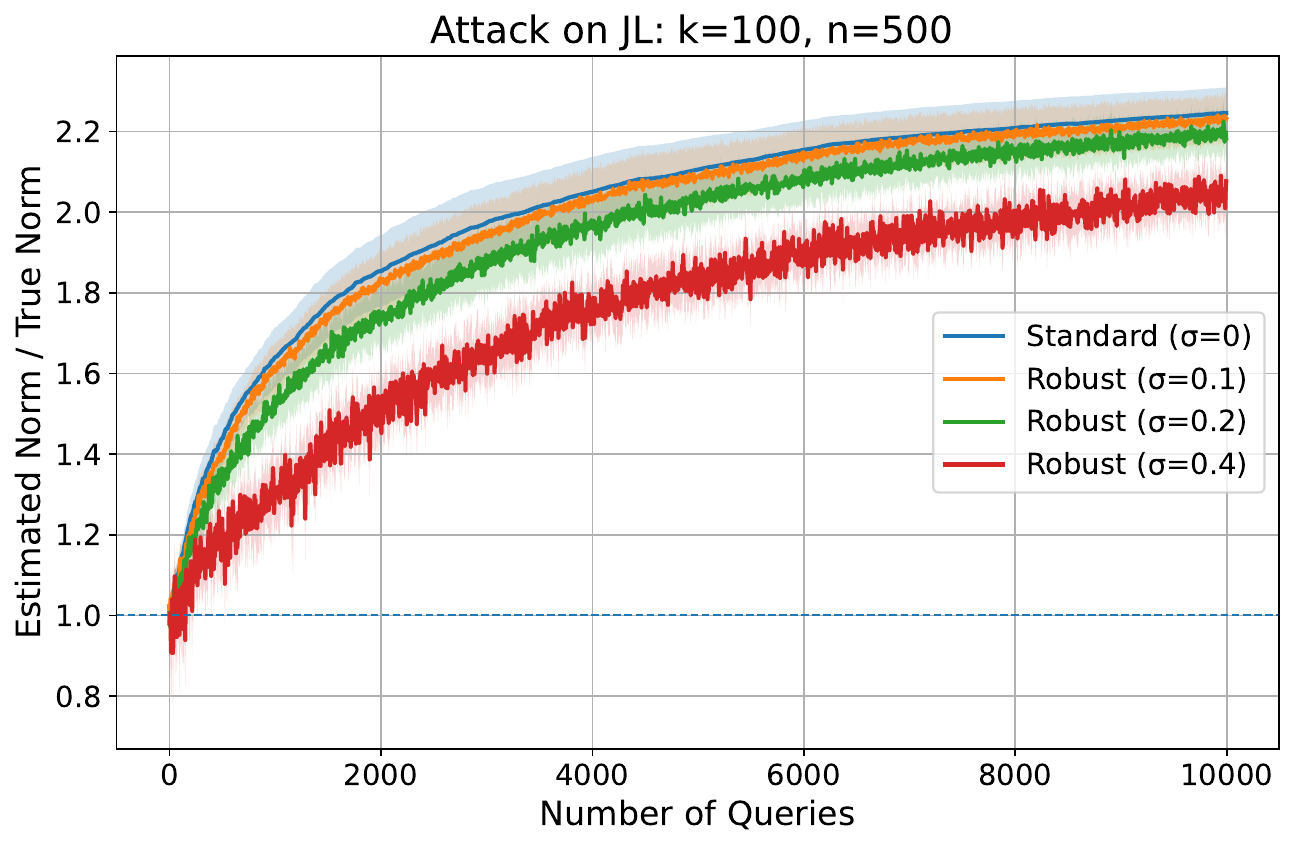}
    \includegraphics[width=0.30\textwidth]{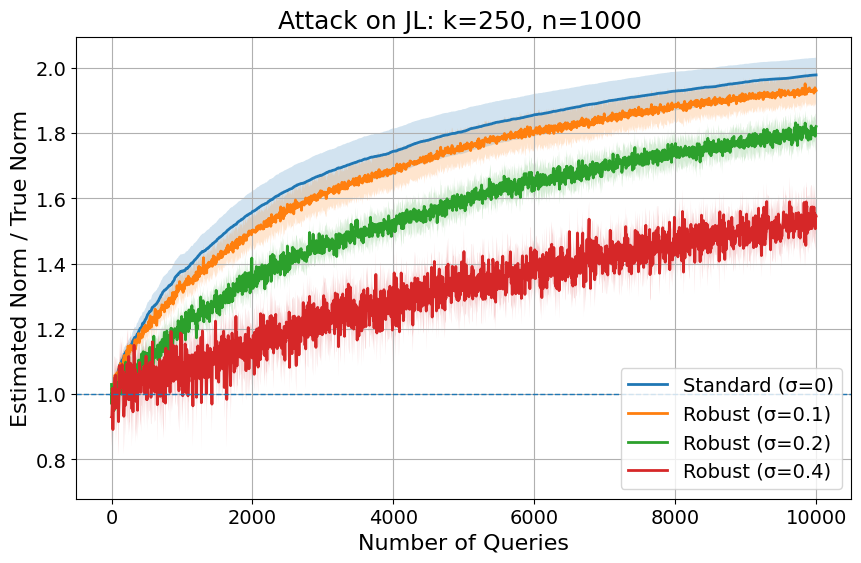}
    \includegraphics[width=0.30\textwidth]{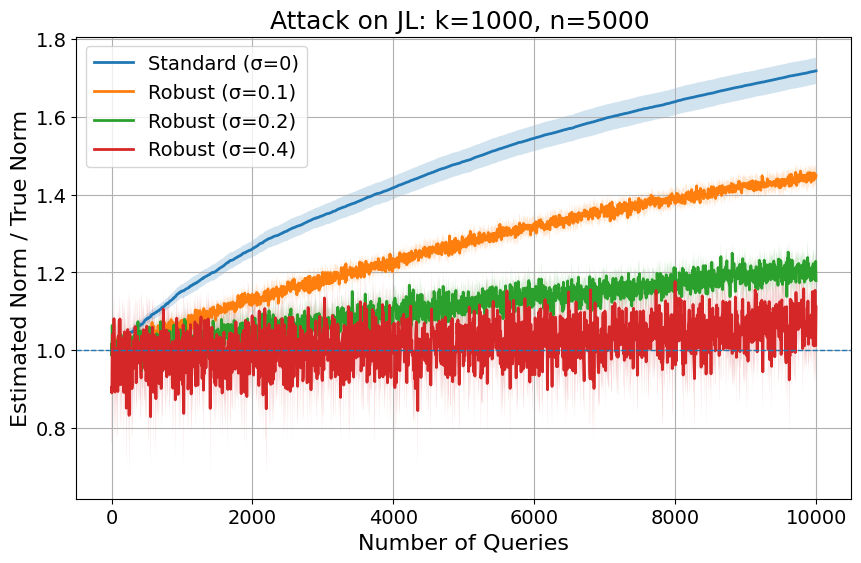}
    \caption{Attack on JL with $(k,n)\in\{(100,500), (250,1000), (1000,5000)\}$. Standard and robust estimators. Mean ratio of estimate to actual norm with 95\% confidence intervals over 20 repetitions.
\label{JLattack:fig}}
\end{figure}

\begin{figure}[h]  
    \centering  
    \includegraphics[width=0.30\textwidth]{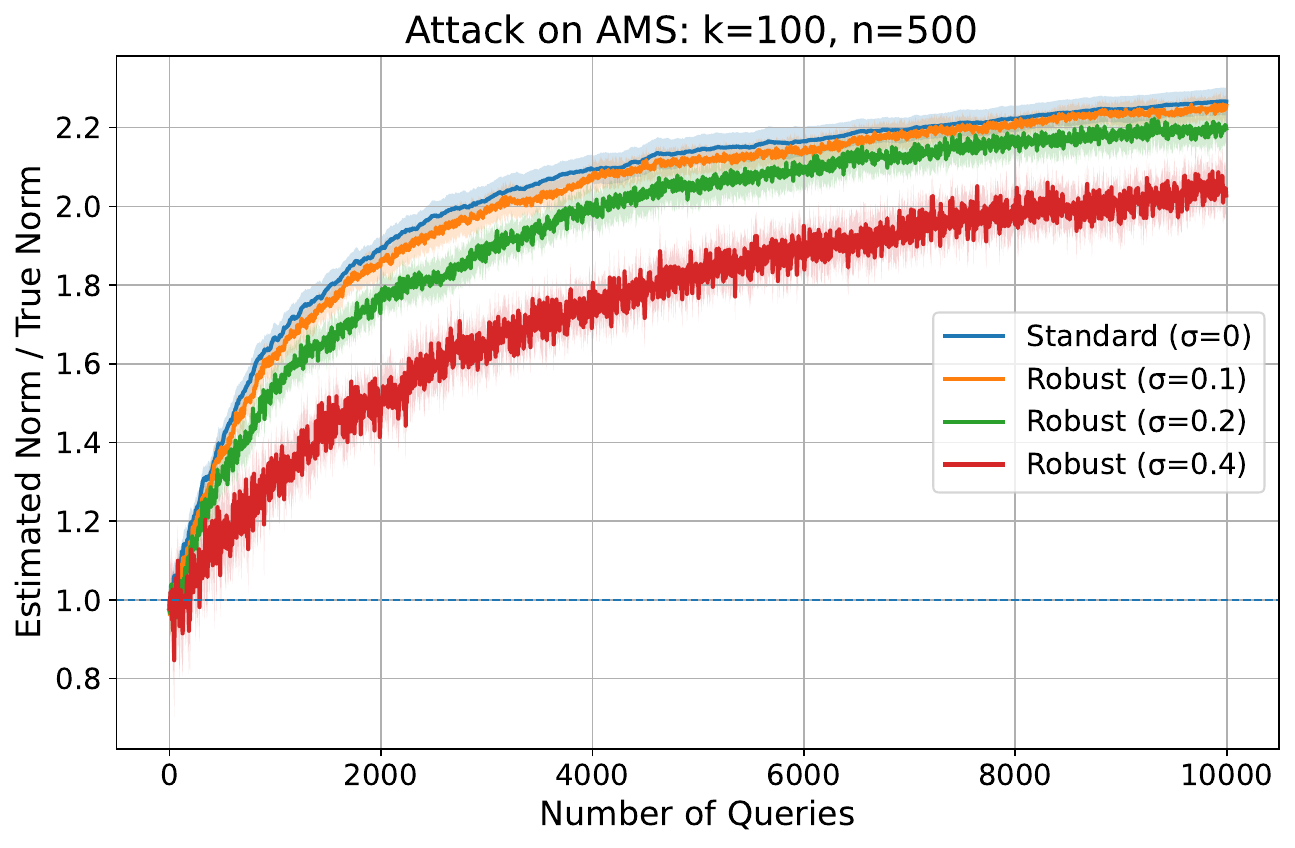}
    \includegraphics[width=0.30\textwidth]{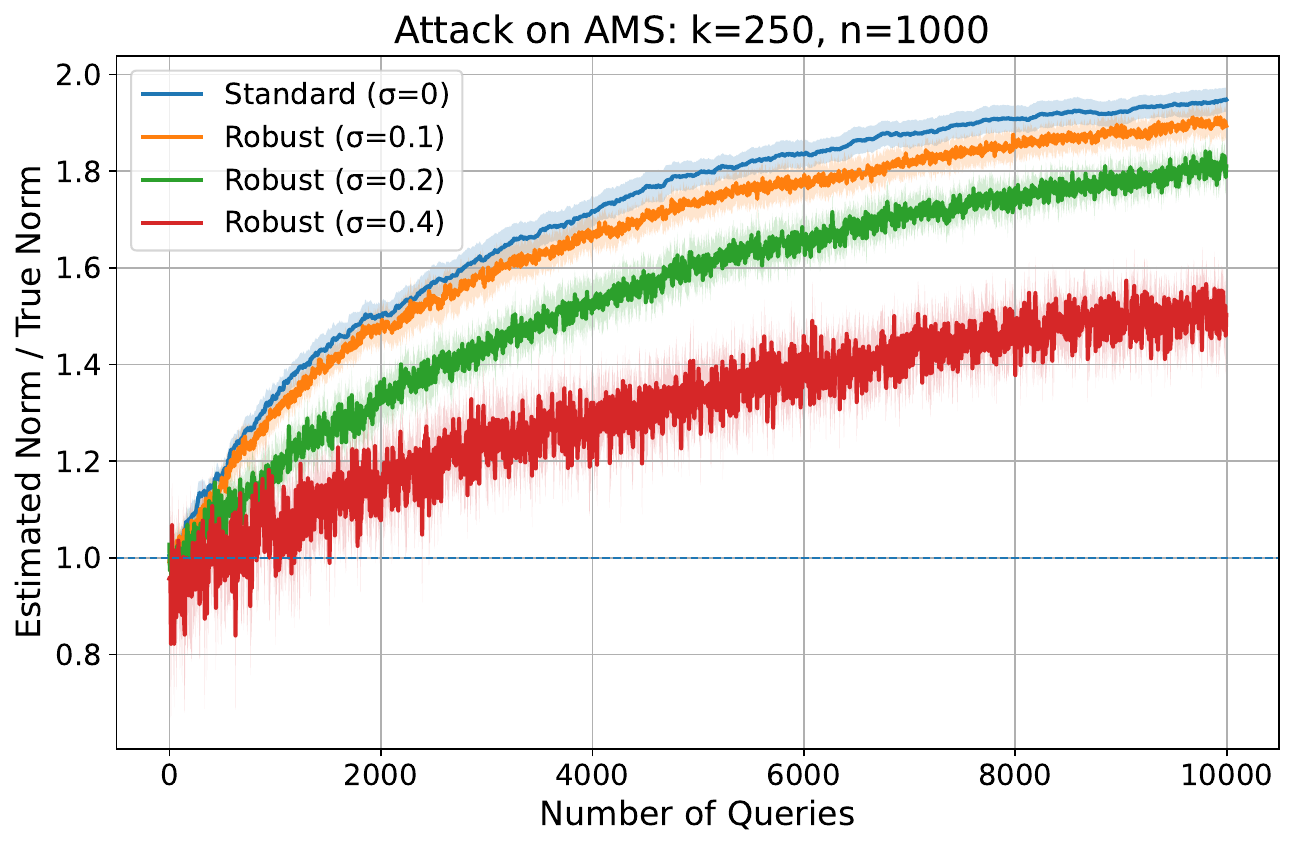}
    \includegraphics[width=0.30\textwidth]{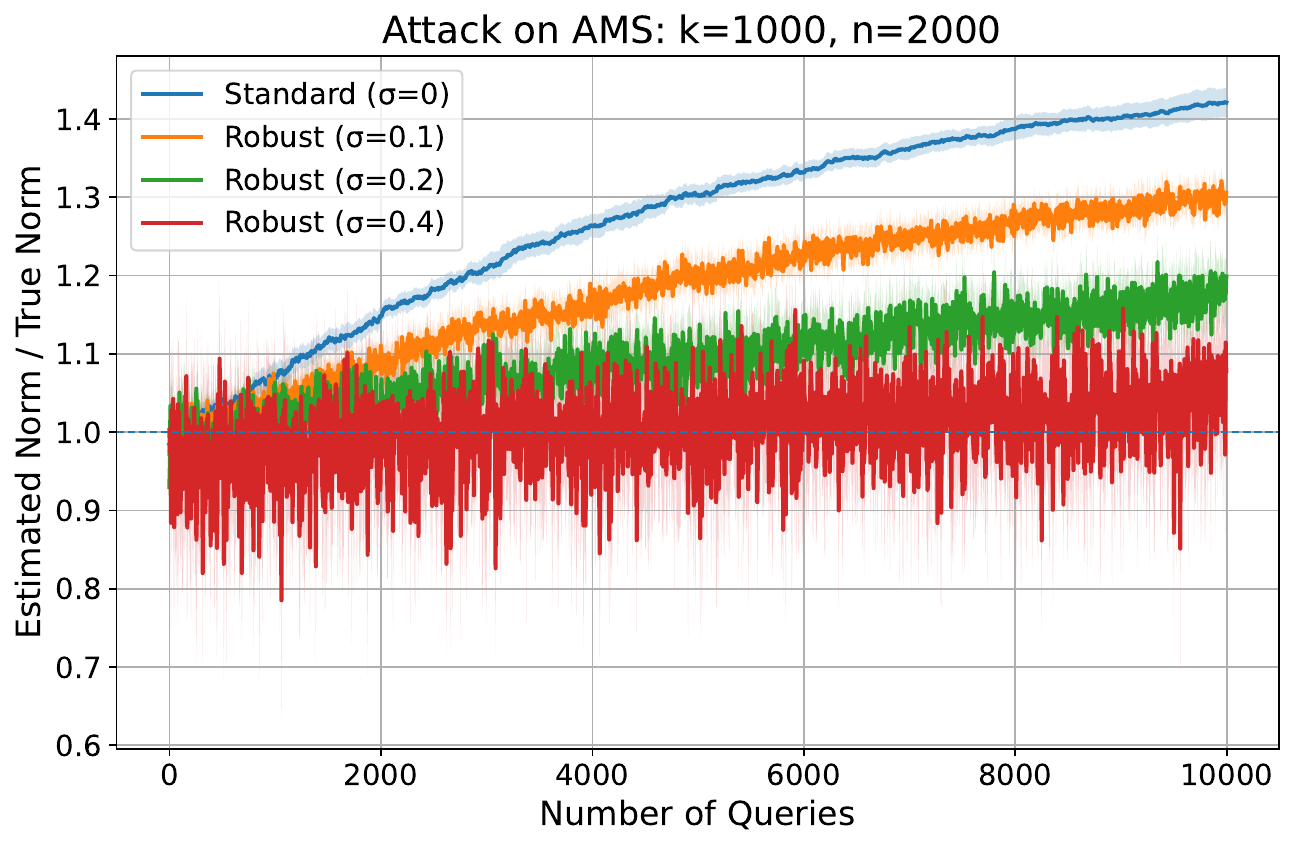}
    \caption{Attack on AMS with $(k,n)\in\{(100,500), (250,1000), (1000,2000)\}$. Standard and robust estimators. Mean ratio of estimate to actual norm with 95\% confidence intervals over 20 repetitions.
\label{AMSattack:fig}}
\end{figure}

\begin{figure}[h]  
    \centering  
    \includegraphics[width=0.30\textwidth]{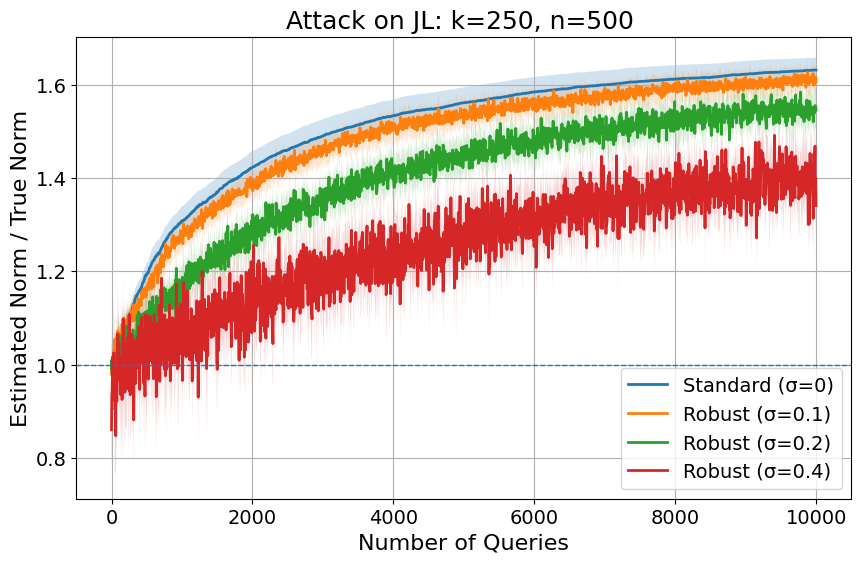}
    \includegraphics[width=0.30\textwidth]{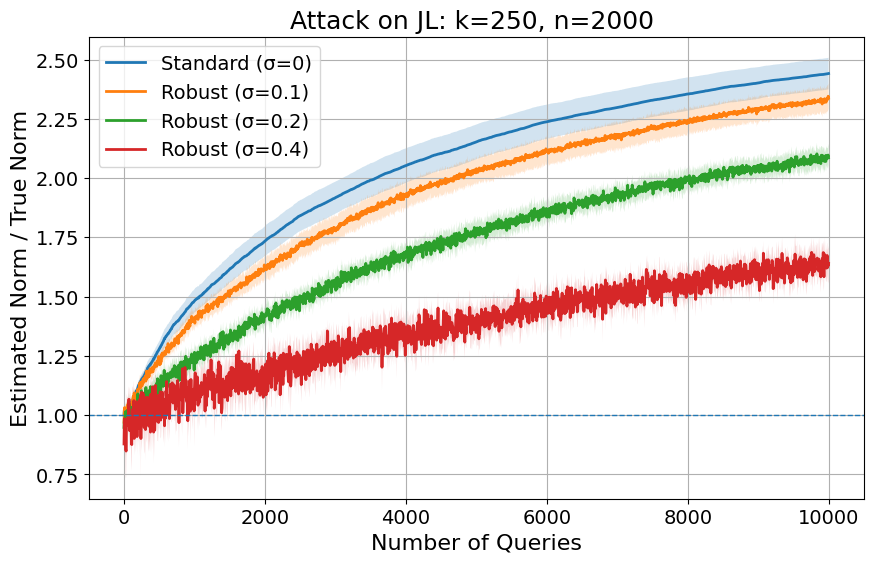}
    \includegraphics[width=0.30\textwidth]{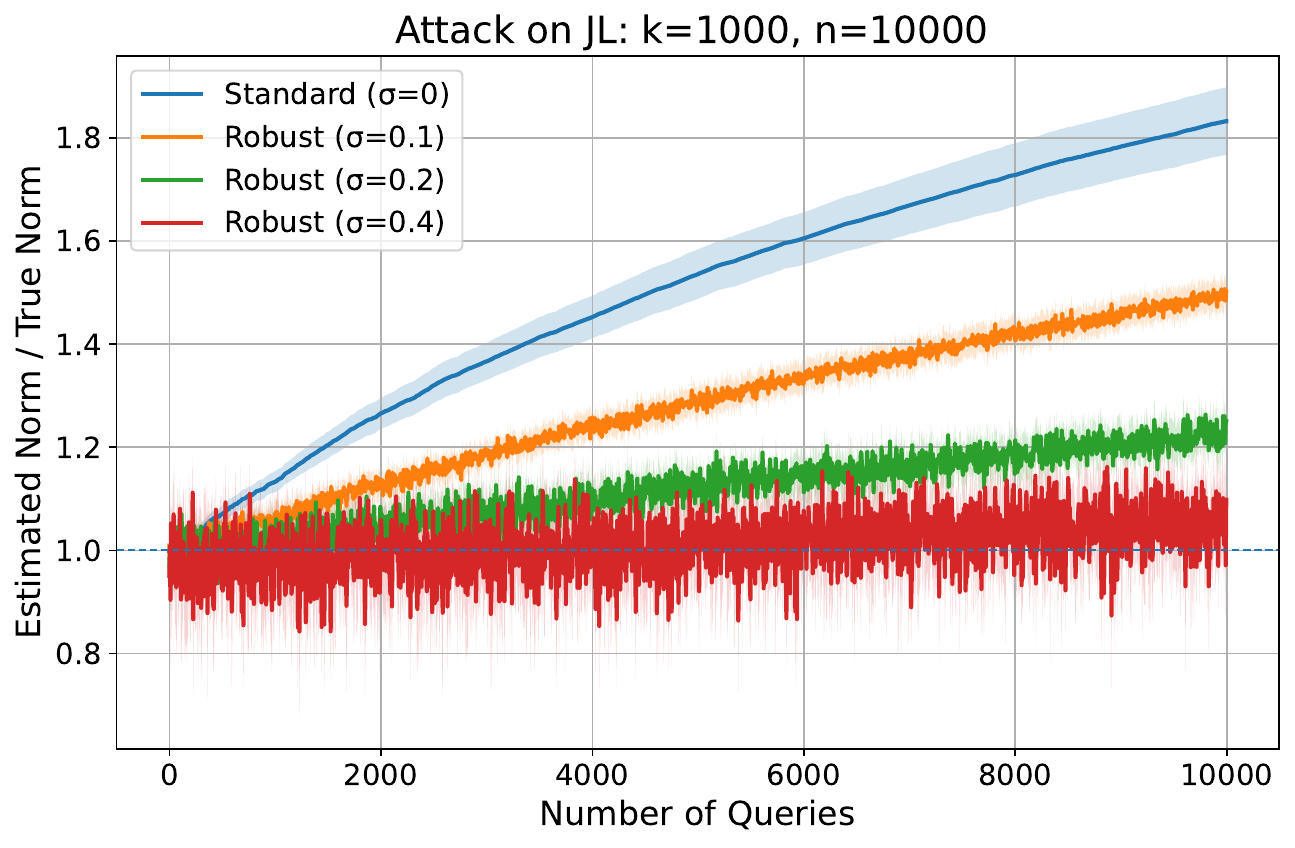}
    \caption{Attack on JL with $(k,n)\in \{(250,500), (250,2000), (1000,10000)\} $. Mean ratio of estimate to actual norm with 95\% confidence intervals over 20 repetitions.
\label{JLattacksweepn:fig}}
\end{figure}

\ignore{
\subsection{image classifiers}

  Single-batch or adaptive attacks against an image classifier (piggy back on published Github experiments).

  Baseline: Use the logit prediction weights. The usual attack (perhaps in a single batch). Show that it works.

  Robust estimator: We can add noise to the logits or prediction probabilities. But alternatively we can  just add noise to the image before classification. We can show that this works.

  Show that the full black-box still works (only of course takes longer).
  Argue that privacy noise results in higher protection (but less accuracy perhaps) than well-studied robust smooth classifiers in logit-aware settings. This explains the protection of smooth classifiers and shows its inherent limitations.

 Smooth classifiers and robustness:
 
The output is averaged over a noise ball of the input (few samples of it).
$\E_{\boldsymbol{u}\sim \mathcal{N}_{n,M}(0,\sigma^2)}\|A(\boldsymbol{v}+ \boldsymbol{u})\|_2$. This is "inspired" by proposed robustification for attacks on images. 

Note this will not help for $\ell_2$ norm with linear sketches. The squared norms it is just a shift (the norm of the sketch shifts by $\|A\|_F^2$. For proper norm it is roughly 
$\sqrt{\|A\boldsymbol{v}\|_2^2 + \|A\|_F^2}$ (second term scaled by the noise scale). So this does not help with robustness. 

But with images it smoothing appears to help. There are two components of the vulnerability. 

Experiment. For input noisy image. Return average of logits from $\ell$ noise vectors. Larger $\ell$ should be less robust in the linear case. 
}

\section{Conclusion} \label{conclusion:sec}
Our results further suggest that vulnerability to black-box attacks is an inherent consequence of dimensionality reduction. Nonetheless, it can be partly mitigated by sacrificing some non-adaptive accuracy in exchange for increased robustness, underscoring a fundamental trade-off. We close with several directions for further investigation.

\paragraph{Theory questions.}
We proved a tight, quadratic-size universal attack that succeeds against
any query responder, but our analysis only guarantees compromise of the
\emph{optimal} (linear) estimator for the sketch and attack distribution. 
Empirically we observed that our attack compromised also the non-linear median-of-means estimator for AMS. This suggests exploring broader applicability of our attack.  Another natural question is whether our technique can be extended
to other norms, as in \citet{Gribelyuk2025Lifting}.
A more ambitious goal is to construct a quadratic-size attack that generates a \emph{single distribution} capable of compromising \emph{every} query responder. Such an attack would necessarily require $\tilde{\Omega}(k)$ adaptive batches and could plausibly build on an enhanced version of our construction that incorporates the discovered adversarial directions into the input distribution~\citep{CNSSS:ArXiv2024}.
 %
Another open question concerns the required size of the noise support, which directly corresponds to the attacker's storage cost. Our current proof requires $n = \Omega(k^{2})$ because it relies on a general bound (Lemma~\ref{lem:signed-gauss-upper}) for arbitrary signed sums of Gaussian vectors. Empirically, however, we observe that our attack remains effective with support as small as $2k$, and we conjecture that $\tilde{O}(k)$ support should suffice.

\paragraph{Connections to Image Classifiers.}
Our findings on the vulnerability of linear sketches may shed light on
the phenomenon of adversarial examples in image classification, which
has been extensively studied in prior work~\citep{szegedy2013intriguing,
goodfellow2014explaining,athalye2018synthesizing,
papernot2017practical,MadryRobustness:ICLR2018,Salman:NEURIPS2019}. Many of these attacks exhibit a striking structural similarity to our setting: they construct adversarial examples by aggregating small, randomly oriented perturbations that consistently push the system's output in a particular direction. When normalized and accumulated, these perturbations yield a large, adversarial deviation that fools the model. This additive, alignment-based mechanism appears to be remarkably effective across domains.

Although image classifiers such as CNNs are highly nonlinear, they contain significant linear components, particularly in early layers. This observation raises a natural question: could adversarial susceptibility in image models arise, at least in part, from the same linearity-driven accumulation of aligned noise?
If so, it may be possible to improve robustness by introducing carefully designed randomness—such as injecting noise that is unknown to the attacker into internal representations or into the input.

\ignore{

\paragraph{Insights on image classifiers}
Our work allows us to gain insights into potential vulnerabilities for image classifiers demonstrated in prior works~\citep{szegedy2013intriguing,goodfellow2014explaining,athalye2018synthesizing,papernot2017practical,MadryRobustness:ICLR2018,Salman:NEURIPS2019}

These attacks have a common form to our proposed attack. This form appears surprisingly effective across domains: 
The attack vectors are formed by adding perturbations sampled from a noise distribution.
The adversarial input is constructing by linearly adding randomly sampled perturbations that resulted in tiny deviations in the same direction to the sketch-based response. The normalized sum appears to have a much larger deviation. 

The product of these attacks is an input that compromises the learned classifier, which in the learning process is optimized for the data distribution.

While the CNNs based image classifiers are not linear, they do have linear components. Can the explanation for the source of vulnerability be at least partly the same one? Can similar techniques trade accuracy for robustness?

\citep{MadryRobustness:ICLR2018} observed that robustness improves by the use of \emph{smooth classifiers}. A smooth classifier returns the average response over a Gaussian perturbation around the input. They explain the improvement by the smoothing of the decision boundaries. Interestingly, smooth classifiers do not add robustness with for $\ell_2$ norm estimations from linear sketches: as it essentially results in a fixed shift in the estimator.  
}

\ignore{
is that capacity (of the network) is critical for robustness. 
They explain in the sense of modeling more complex decision boundaries around ``noise balls'' of examples and propose robust classifiers via training methods that learn those boundaries by learning \emph{smoothed classifiers}. 

Interestingly, smooth classifiers do not add robustness for $\ell_2$ norm estimation from linear sketches: as it essentially results with a fixed shift in the estimator (an equivalent estimator). The theory, however, reveals another source of vulnerability that is inherent in the dimensionality reduction (that is mostly attributed to the architecture and capacity) and depends on the variance of the final prediction.

=========
}

\ignore{
\eccomment{the following should be part of discussion. Also can we design experiments that show benefits of noisy outputs?  Is it like averaging over the input noise balls and then sampling?}
An important takeaway in~\citep{MadryRobustness:ICLR2018} is that capacity (of the network) is critical for robustness. 
They explain in the sense of modeling more complex decision boundaries around ``noise balls'' of examples and propose robust classifiers via training methods that learn those boundaries by learning \emph{smoothed classifiers}. 

Interestingly, smooth classifiers do not add robustness for $\ell_2$ norm estimations from linear sketches: as it essentially results with a fixed shift in the estimator (an equivalent estimator). The theory, however, reveals another source of vulnerability that is inherent in the dimensionality reduction (that is mostly attributed to the architecture and capacity) and depends on the variance of the final prediction.

The insight we gain from the theory is that capacity (dimensionality) is also important even in linear models as it allows for a higher representation dimensionality. But to exploit this against attacks of the type proposed, there is a different technique needed which is adding noise to the final prediction. The potential gain in this case is quadratic in the internal dimensionality in terms of the number of queries needed to distort the result.

\eccomment{What might add to robustness of the classifier is sampling from an input noise ball (instead from the final). This mimics noisy output.}

These attacks were designed for when the final precise logit weights are revealed to the attacker. The theory suggests that to increase robustness, we need to protect these logits with noise. 
}

\newpage

\section*{Acknowledgments}

Edith Cohen was partially supported by Israel Science Foundation (grant 1156/23). 
Uri Stemmer was Partially supported by the Israel Science Foundation (grant 1419/24) and the Blavatnik Family foundation.

\bibliographystyle{plainnat}
\bibliography{references,robustHH}
\newpage
\appendix

\section{Norm estimation to signal estimation} \label{normtosignalproof:sec}

This section contains a restatement and proof of \cref{lem:norm-to-signal}:

\normtosignal*

\begin{proof}
The query vectors $\boldsymbol{v}= w \boldsymbol{e}_h  + c\boldsymbol{u}$ sampled in \cref{algo:attack}  are from distributions $F_{h,M,c}[w]$, for $w\in [a,b]$.
The squared norm is $\|\boldsymbol{v}\|^2_2 = w^2+ c^2\|\boldsymbol{u}\|^2_2$. 
We show that 
$\|\boldsymbol{u}\|^2_2$ is 
tightly concentrated around its expectation $1$:
\begin{claim} [Concentration of $\|\boldsymbol{u} \|^2_2$]
\[
    \Pr\left[ \left|\|\boldsymbol{u} \|^2_2 - 1\right|\geq \epsilon \right] \leq 2e^{- \frac{m}{2}(\epsilon^2/2)}
    \]
\end{claim}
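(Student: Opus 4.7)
The plan is to reduce the claim to a standard concentration inequality for chi-squared random variables. Since $\boldsymbol{u}$ has coordinates $u_i \sim \mathcal{N}(0,1/m)$ independently for $i\in M$ and zero elsewhere, the rescaled coordinates $\sqrt{m}\,u_i$ are i.i.d.\ standard normals, so $Y := m\|\boldsymbol{u}\|_2^2 = \sum_{i\in M}(\sqrt{m}\,u_i)^2 \sim \chi^2_m$. In particular $\mathbb{E}\|\boldsymbol{u}\|_2^2 = 1$, and the event $\{|\|\boldsymbol{u}\|_2^2 - 1| \geq \epsilon\}$ becomes $\{|Y - m|\geq m\epsilon\}$, so it suffices to prove a two-sided Chernoff bound $\Pr[|Y-m|\geq m\epsilon] \leq 2e^{-m\epsilon^2/4}$.

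Next I would invoke the moment generating function $\mathbb{E}[e^{tY}] = (1-2t)^{-m/2}$, valid for $t<1/2$, together with Markov's inequality applied to $e^{tY}$ and to $e^{-tY}$. For the upper tail, taking $t = \epsilon/(2(1+\epsilon))$ gives
\[
\Pr[Y \geq m(1+\epsilon)] \leq \exp\!\left(\tfrac{m}{2}\bigl(\log(1+\epsilon) - \epsilon\bigr)\right),
\]
and for the lower tail, taking $t = \epsilon/(2(1-\epsilon))$ yields
\[
\Pr[Y \leq m(1-\epsilon)] \leq \exp\!\left(\tfrac{m}{2}\bigl(\log(1-\epsilon) + \epsilon\bigr)\right).
\]
Both exponents are bounded above by $-m\epsilon^2/4$ using the elementary inequalities $\log(1+x) - x \leq -x^2/4$ and $\log(1-x)+x\leq -x^2/4$ (valid for $x\in[0,1)$, in the relevant range of $\epsilon$). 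A union bound then gives the stated $2\exp(-\tfrac{m}{2}\cdot\epsilon^2/2)$.

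There isn't really a hard step here; the whole argument is textbook chi-squared concentration (essentially Laurent–Massart). The only care needed is to ensure the elementary Taylor-type inequalities hold in the range of $\epsilon$ under consideration, but since later applications (via the signal-to-noise parameter choices in \cref{signaldensity:def}) take $\epsilon$ bounded away from $1$, this is automatic. The lemma then plugs back in immediately: $\|\boldsymbol{v}\|_2^2 = w^2 + c^2\|\boldsymbol{u}\|_2^2$ is within $c^2\epsilon$ of $w^2 + c^2$ with probability $1-2e^{-m\epsilon^2/4}$, and choosing $\epsilon$ a small constant multiple of $\alpha$ together with $m = \Omega((k+r)\log((k+r)/\delta))$ makes a union bound over all $r$ queries succeed, translating any $(1-c^2, 1.1\alpha)$-norm gap answer into a $(1,\alpha)$-signal gap answer.
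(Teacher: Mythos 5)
Your approach is the same as the paper's: identify $m\|\boldsymbol{u}\|_2^2 \sim \chi^2_m$ and invoke standard chi-squared concentration. The paper simply cites the concentration bound, while you unwind the Chernoff argument explicitly; that's a reasonable thing to spell out.

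However, there is an arithmetic slip in your final step that in fact points to a constant sitting slightly wrong in the stated claim. Plugging $\log(1+\epsilon)-\epsilon\leq -\epsilon^2/4$ into the exponent $\tfrac{m}{2}\bigl(\log(1+\epsilon)-\epsilon\bigr)$ gives $-\tfrac{m\epsilon^2}{8}$, not $-\tfrac{m\epsilon^2}{4}$ as you assert; the same is true for the lower-tail exponent. Worse, the exponent $-m\epsilon^2/4$ is actually \emph{unachievable} for the upper tail by any Chernoff argument, because
\[
\log(1+\epsilon)-\epsilon \;\ge\; -\tfrac{\epsilon^2}{2}\qquad\text{for all }\epsilon>0
\]
(the function $\epsilon\mapsto\log(1+\epsilon)-\epsilon+\epsilon^2/2$ has nonnegative derivative $\tfrac{\epsilon^2}{1+\epsilon}$), so the optimal upper-tail Chernoff bound is $\exp\!\bigl(\tfrac{m}{2}(\log(1+\epsilon)-\epsilon)\bigr)\ge e^{-m\epsilon^2/4}$ and cannot be improved to the stated form. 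For the lower tail, $\log(1-\epsilon)+\epsilon\le -\epsilon^2/2$ does hold, so that side is fine. The honest two-sided statement that your calculation establishes is $\Pr[|\,\|\boldsymbol{u}\|_2^2-1|\ge\epsilon]\le 2e^{-m\epsilon^2/8}$.

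This discrepancy is immaterial to the lemma being proved: the plug-in in \cref{lem:norm-to-signal} takes $\epsilon=\Theta(1/\sqrt{k})$ and $m=\Omega((k+r)\log((k+r)/\delta))$, and any bound of the form $e^{-\Omega(m\epsilon^2)}$ suffices for the union bound over $r$ queries. So the downstream argument goes through with $e^{-m\epsilon^2/8}$ just as well. But as written, the last sentence of your proof asserts an inequality that does not follow from the elementary facts you cite (and in the upper-tail case cannot be repaired); you should weaken the exponent by the factor of $2$ and note that this still suffices.
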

\begin{proof}
    The sum of the squares of $m$ i.i.d.\ $\mathcal{N}(0,\sigma^2)$ has distribution $\sigma^2   \chi_m^2$ and expected value $m\sigma^2$. 
    Applying tail bounds on $\chi^2_m$ (Gaussian concentration of measure) we obtain that for $\epsilon\in (0,1)$
    \[
    \Pr\left[ \frac{\left|\|\boldsymbol{u} \|^2_2 - m\sigma^2\right|}{m\sigma^2}\geq \epsilon \right] \leq 2e^{- \frac{m}{2}(\epsilon^2/2)}\ .
    \] Substituting $\sigma^2=1/m$ we obtain the claim.
\end{proof}
It follows that  
if we choose $m=\Omega((k+r) \log ((k+r)/\delta))$ then with probability $1-\delta$, on all our queries, the squared norm of the noise is within 
$\|\boldsymbol{u}\|^2_2 \in (1\pm 1/(10\sqrt{k})$. 
Therefore, a correct $(1,\alpha)$-gap output on the norm yields a correct $(1,\alpha^2+2\alpha)$-gap on the squared norm
$\|\boldsymbol{v} \|^2_2$. This gives an $(1-c^2(1+1/(10\sqrt{k})),\alpha^2+2\alpha -c^2/(10\sqrt{k})$-gap output on the squared signal $w^2$.

We now note that we can assume $\alpha > 1/\sqrt{k}$ because otherwise,  any responder would be incorrect with constant probability.
Using our parameter setting of $\ell$ close to $1$ and fixed $c\ll 1$ we obtain
the claim in the statement of the lemma.
\end{proof}

\section{\nameref{signalsufficient:lemma}} \label{signalsufficient:sec}
This section contains a restatement and proof of \cref{signalsufficient:lemma}.

\signalsufficient*
\begin{proof}
 If the column $A_{\bullet h}$ is zero, then the sketch contains no information on the signal $w$ and unbiased estimation is not possible. Otherwise, our goal is to express the unbiased sufficient statistic $T_{h,M}(A\boldsymbol v)$ for the unknown scalar $w$.

Because row operations preserve the information in a sketch, there exists an invertible matrix $G\in \mathbb{R}^{k \times k}$ such that for $A' = G A$,
\begin{itemize}
  \item the transformed column $h$ has value $1$ in the first row and $0$ elsewhere: $A'_{1h}=1$ and $A'_{ih}=0$ for $i>1$; and
  \item every row $A'_{iM}$ for $i>1$ of the submatrix restricted to the noise coordinates, is either orthogonal to the first row $A'_{1M}$ or is the zero vector $\boldsymbol{0}$.
\end{itemize}

Since $G$ is invertible, $A'$ is equivalent to the original sketching matrix $A$ in that we can obtain the sketch $A'\boldsymbol{v}$ from
$A\boldsymbol{v}$ and vice versa.

We specify a sufficient statistics $T$ in terms of $A'\boldsymbol{v}$ (so that 
$T_{h,M}(A\boldsymbol{v})\equiv T(GA\boldsymbol{v})$).
Consider the distribution of the sketch $\boldsymbol{y} = A' \boldsymbol{v}=GA\boldsymbol{v}$ for
$\boldsymbol{v} = w \boldsymbol{e}_h + c\boldsymbol{u}$ where $\boldsymbol{u}\sim \mathcal{N}_{n,M}$.
We have $y_1 = w + c\sum_{j\in M} A'_{ij} u_j$, hence it has distribution
\begin{equation} \label{Tdensity:eq}
c \mathcal{N}(w,\sigma^2_T)\ , \text{ where  } \sigma^2_T := \| A'_{1M}\|^2_2 \sigma^2 = \frac{1}{m}\| A'_{1M}\|^2_2\ .
\end{equation}
From orthogonality of the rows $A'_{iM}$ restricted to $M$, the random variables $y_i$ for $i>1$ are independent of $A'_{1\bullet} \boldsymbol{u}$ and hence convey no information on $w$.  
The information on the signal $w$ in the sketch, and the unbiased sufficient statistic is therefore
$y_1=(GA\boldsymbol{v})_1$, which has distribution~\eqref{Tdensity:eq} and is an unbiased estimator of $w$.

To establish the claim for the additive error, note that
\[
\Delta_{h,M}\left(cA\boldsymbol{u}\right) =  T_{h,M}(A\boldsymbol{v})-w = (GA\boldsymbol{v})_1 -w = (cGA\boldsymbol{u})_1 \ .
\]
\end{proof}

\section{Proof of \nameref{sigmaTlower:lemma} Lemma} \label{sigmalower:sec}
This section contains a restatement and proof of \cref{sigmaTlower:lemma}.

\lowerbounderror*

We will use the following technical claims

\begin{definition} [Fragile Columns] \label{fragilecolumns:def}
    Let $A\in\mathbb{R}^{k\times m}$.  For each column $h\in[m]$, define 
\[
K_h \;=\;\{\,i\in[k]:A_{i h}\neq0\}, 
\qquad
b_{i h} \;=\;\bigl|\{\,j\in[m]:A^2_{i j}\ge A^2_{i h}\}\bigr|
\]
the  set $K_h$ of active rows and the dominated number $b_{ih}$ of each active row.
Let  $i\in K_h$
\[
c^{(h)}_1\le c^{(h)}_2\le\cdots\le c^{(h)}_{|K_h|}
\]
be the nondecreasing rearrangement of $\{b_{i h}:i\in K_h\}$).  
We say that a column $h$ is \emph{fragile}, if:
\[
\forall\,1\le i\le|K_h|:\quad
c^{(h)}_i \;\ge\;\frac{i\,m}{10 k\log_2 k}.
\]
\end{definition}
Note that zero columns are fragile by definition.

\begin{claim} [Most Columns are Fragile] \label{fragilecolumns:claim}
If $m>2 k\log_2 k$ then at least $0.9 m$ of the columns are fragile.
\end{claim}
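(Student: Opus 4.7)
The plan is a dyadic union bound over $i$, combined with the elementary identity that for each row $j$, at most $r$ columns have $b_{jh} \le r$ (these are precisely the top $r$ columns of row $j$ by $A_{jh}^2$).

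First, I would reformulate non-fragility: column $h$ is non-fragile iff some $i \in [|K_h|]$ satisfies $c_i^{(h)} < \frac{im}{10 k \log_2 k}$, and by monotonicity of $c^{(h)}$, a witness at $i \in (2^{\ell-1}, 2^\ell]$ already forces $c_{2^{\ell-1}}^{(h)} < \frac{2^\ell m}{10 k \log_2 k}$. So it suffices to bound, for each dyadic scale $\ell \in \{0, 1, \ldots, \lceil \log_2 k\rceil\}$, the set $Q_\ell$ of columns whose $2^{\ell-1}$-th smallest active rank (over $j \in K_h$) is below $r_\ell := \lceil 2^\ell m / (10 k \log_2 k)\rceil$.

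For each scale I would double count. The total number of $(j,h)$ pairs with $b_{jh} \le r_\ell$ is at most $k r_\ell$, since each row contributes at most $r_\ell$ such columns. Hence $2^{\ell-1} |Q_\ell| \le k r_\ell$, yielding $|Q_\ell| = O(m / \log_2 k)$, with a lower-order additive term $O(k/2^\ell)$ from the rounding in $r_\ell$ absorbed by the hypothesis $m > 2 k \log_2 k$. Union-bounding over the $O(\log_2 k)$ scales gives $O(m)$ non-fragile columns in total, where the hidden constant is determined by the $10$ in the fragility definition.

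The main obstacle is pinning down the absolute constant to reach the target $0.1 m$. A naive execution of the above yields roughly $m/5$ non-fragile columns, a factor of two from $m/10$, the loss coming from counting the same $h$ across both its own scale and the next dyadic level. I would close the gap either by enlarging the denominator $10$ in the fragility definition (which is internal to the proof and chosen to make this argument go through), or by replacing the geometric union bound with a single-pass averaging that integrates $\sum_{h,\, j \in K_h} \mathbf{1}[b_{jh} \le r]$ jointly across all thresholds $r$ simultaneously, trading the per-scale $\log_2 k$ loss for one uniform inequality.
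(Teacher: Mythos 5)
Your proposal follows essentially the same route as the paper's proof: a dyadic decomposition over the fragility index $i$, combined with a row-by-row double count of columns of small rank $b_{jh}$. The one structural difference is that you count \emph{prefix} hits $b_{jh}\le r_\ell$ at each scale (nested thresholds), whereas the paper counts hits inside a dyadic window $R_t=[2^{t-2},2^t)\cdot\tfrac{m}{10k\log_2 k}$; a prefix threshold is roughly twice as wide as the window, which is precisely the factor-of-two slack you flagged and explains your $m/5$ rather than $m/10$. You are right that this is absorbed by retuning the constant $10$ in \cref{fragilecolumns:def}, which is internal to the construction and does not affect the $\Omega(1/(k\log k))$ bound downstream; the paper's own window count has a comparable constant-level slip (it treats $|R_t|$ as $2^{t-1}\cdot\tfrac{m}{10k\log_2 k}$ when the window actually has length $3\cdot 2^{t-2}\cdot\tfrac{m}{10k\log_2 k}$), so both arguments rely on the same elbow room. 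One small notational point: for $\ell=0$ your ``$2^{\ell-1}$-th smallest'' should be read as ``$1$st smallest'', and that base scale actually contributes only $k r_0$, not $2kr_0$, so handling $\ell=0$ separately recovers a little of the constant.
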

\begin{proof}
We look at $\lfloor \log_2 k\rfloor$ ranges of $b_{ih}$ values where $R_1 = [1,\ldots,\frac{m}{10 k\log_2 k})$ and  $R_t = [2^{t-2},2^t)\frac{m}{10 k\log_2 k}$ for $t\geq 2$. 

We say a column $h$ is \emph{strong} for range $t$ if it has $2^{t-1}$  or more $b_{ih}$ values in $R_t$. That is, $|\{i : b_{ih}\in R_t\}|\geq 2^{t-1}$.
Clearly if a column $h$ is not fragile then it must be strong for some range $t$, but the converse may not hold.

We bound the number of distinct columns that are strong for at least one range $t$. This bounds the total number of non-fragile columns.

Consider range $t=1$. Each of the $k$ rows contributes the columns of its $\frac{m}{10 k\log_2 k})$ largest entries. So the total number of columns that can be strong for range $t=1$ is at most $\frac{m}{10 \log_2 k})$.

Now consider $t>1$. Each row has at most $2^{t-1} \frac{m}{10 k\log_2 k}$ distinct columns $h$ with $b_{ih}\in R_t$. But for a column to be strong for $t$ it has to participate in $2^{t-1}$ such rows.  So in total, the range contributes at most $\frac{m}{10 \log_2 k})$ strong columns.

Summing over all ranges, we obtain a bound of $m/10$ on the total number of columns that are strong for at least one range. This also bounds the number of non-fragile columns. This concludes the proof.
\end{proof}

\begin{claim} [Nearly to fully orthogonal rows]\label{nearlytofully:claim}
    Let $\boldsymbol{v}^{(1)},\dots,\boldsymbol{v}^{(k)}\in\mathbb{R}^n$ for $k\geq 1$ be linearly independent and satisfy
\[
\langle \boldsymbol{v}^{(i)},\boldsymbol{v}^{(j)}\rangle = 1 \quad(i\neq j),
\qquad
\|\boldsymbol{v}^{(i)}\|^2 >  \,i (2+\ln k)
\quad(i=1,\dots,k).
\]
Then there are orthogonal
$\boldsymbol{u}^{(1)},\ldots,\boldsymbol{u}^{(k)}$ such that $\boldsymbol{u}^{(1)}=\boldsymbol{v}^{(1)}$, and
\begin{align*}
    \|\boldsymbol{u}^{(i)}\|^2_2 >  \|\boldsymbol{v}^{(i)}\|^2_2 - 1 \geq i (1+\ln k)
\quad(i=2,\dots,k)\\
\boldsymbol{u}^{(i)} \text{ is an affine combination of } \boldsymbol{v}^{(1)},\ldots,\boldsymbol{v}^{(i-1)} \quad(i=2,\dots,k)
\end{align*}
\end{claim}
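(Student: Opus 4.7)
The plan is to run Gram--Schmidt and exploit the uniform cross inner products $\langle \boldsymbol{v}^{(i)},\boldsymbol{v}^{(j)}\rangle=1$ to collapse the analysis into a single coupled recursion in two scalar sequences. Start with $\boldsymbol{u}^{(1)}=\boldsymbol{v}^{(1)}$ and define
\[
\boldsymbol{u}^{(i)} \;=\; \boldsymbol{v}^{(i)} \;-\; \sum_{j<i}\frac{\langle \boldsymbol{v}^{(i)},\boldsymbol{u}^{(j)}\rangle}{\|\boldsymbol{u}^{(j)}\|^2}\,\boldsymbol{u}^{(j)}.
\]
Writing each $\boldsymbol{u}^{(j)}=\sum_{l\le j}\alpha^{(j)}_l\boldsymbol{v}^{(l)}$, set $R_j:=\|\boldsymbol{u}^{(j)}\|^2$ and $A_j:=\sum_{l\le j}\alpha^{(j)}_l$. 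Standard Gram--Schmidt yields $\alpha^{(j)}_j=1$ and $\boldsymbol{u}^{(j)}\perp\boldsymbol{v}^{(l)}$ for $l<j$, and the unit cross inner products immediately give the key identity
\[
\langle \boldsymbol{v}^{(i)},\boldsymbol{u}^{(j)}\rangle \;=\; \sum_{l\le j}\alpha^{(j)}_l\cdot 1 \;=\; A_j \qquad\text{for every } i>j.
\]

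Plugging this identity into the Gram--Schmidt update for $R_j$ and tallying coefficient sums for $A_j$ gives the coupled recurrences
\[
R_j \;=\; \|\boldsymbol{v}^{(j)}\|^2 - \sum_{l<j}\frac{A_l^2}{R_l}, \qquad A_j \;=\; 1 - \sum_{l<j}\frac{A_l^2}{R_l},
\]
whose difference is the clean identity $R_j - A_j = \|\boldsymbol{v}^{(j)}\|^2 - 1$. Thus the target bound $R_j>\|\boldsymbol{v}^{(j)}\|^2-1$ is \emph{equivalent} to $A_j>0$, which I prove by induction on $j$. The base $A_1=1$, $R_1=\|\boldsymbol{v}^{(1)}\|^2>2+\ln k$ is trivial. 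For the step, the inductive hypothesis $A_l\in(0,1]$ together with $R_l>\|\boldsymbol{v}^{(l)}\|^2-1\ge l(2+\ln k)-1\ge l(1+\ln k)$ gives
\[
\sum_{l<j}\frac{A_l^2}{R_l} \;\le\; \frac{1}{1+\ln k}\sum_{l=1}^{k-1}\frac{1}{l} \;\le\; \frac{1+\ln(k-1)}{1+\ln k} \;<\; 1,
\]
so $A_j>0$, closing the induction and establishing $\|\boldsymbol{u}^{(j)}\|^2>\|\boldsymbol{v}^{(j)}\|^2-1\ge j(1+\ln k)$.

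Finally, to arrange that each $\boldsymbol{u}^{(j)}$ is an affine combination of $\boldsymbol{v}^{(1)},\ldots,\boldsymbol{v}^{(j)}$, I rescale $\boldsymbol{u}^{(j)}\leftarrow\boldsymbol{u}^{(j)}/A_j$ for $j\ge 2$; this leaves the orthogonal family orthogonal, normalizes the coefficient sum to $1$, and the norm only grows, from $R_j$ to $R_j/A_j^2\ge R_j>\|\boldsymbol{v}^{(j)}\|^2-1$ since $A_j\in(0,1]$. The main obstacle is extracting the identity $R_j - A_j = \|\boldsymbol{v}^{(j)}\|^2 - 1$ out of a generic Gram--Schmidt step; once it is in hand the rest is a harmonic-sum estimate, and the hypothesis $\|\boldsymbol{v}^{(i)}\|^2>i(2+\ln k)$ is precisely the slack needed to keep $H_{k-1}/(1+\ln k)$ strictly below $1$ and thereby prevent $A_j$ from collapsing to $0$ during the induction.
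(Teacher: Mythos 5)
Your proof is correct and follows essentially the same route as the paper's: Gram--Schmidt orthogonalization, exploiting the uniform cross inner products $\langle \boldsymbol{v}^{(i)},\boldsymbol{v}^{(j)}\rangle = 1$ to force the projection coefficients to align, and controlling the accumulated loss of norm via a harmonic-sum estimate against the growth hypothesis $\|\boldsymbol{v}^{(i)}\|^2 > i(2+\ln k)$. The only difference in packaging is where the affine normalization happens: the paper rescales in-line at every step (so each $\boldsymbol{u}^{(i)}$ maintains coefficient sum $1$ and $\langle \boldsymbol{v}^{(h)},\boldsymbol{u}^{(i)}\rangle=1$ by invariance), whereas you run plain Gram--Schmidt, track the coefficient sum $A_j$ as a separate scalar, derive the clean identity $R_j - A_j = \|\boldsymbol{v}^{(j)}\|^2 - 1$, and rescale only at the end. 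The two are related by the change of variable $\boldsymbol{u}^{(j)}_{\text{paper}} = \boldsymbol{u}^{(j)}_{\text{yours}}/A_j$, under which the paper's scale factor $s_i$ is exactly your $A_i$. Your formulation via $R_j - A_j = \|\boldsymbol{v}^{(j)}\|^2 - 1$, which reduces the required lower bound on $R_j$ to the positivity of $A_j$, is arguably a cleaner encapsulation of the same argument.
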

\begin{proof}
    We construct vectors $\boldsymbol{u}^{(2)},\ldots,\boldsymbol{u}^{(k)}$ in order using the following operations: 
    \begin{align*}
   \boldsymbol{\tilde{u}}^{(i)} &= \boldsymbol{v}^{(i)}- \;\sum_{j=1}^{i-1}\frac{\langle \boldsymbol{v}^{(i)},\boldsymbol{u}^{(j)}\rangle}{\|\boldsymbol{u}^{(j)}\|^2}\,\boldsymbol{u}^{(j)}\\
    \boldsymbol{u}^{(i)} &= \frac{\boldsymbol{\tilde{u}}^{(j)}}{1- \sum_{j=1}^{i-1}\frac{\langle \boldsymbol{v}^{(i)},\boldsymbol{u}^{(j)}\rangle}{\|\boldsymbol{u}^{(j)}\|^2}}\ .
    \end{align*}

  We establish the claim by induction on $i$.
  The claim clearly holds for $i=1$. 

  Assume it holds for $i$. Then
  $\boldsymbol{u}^{(i)}= \sum_{j=1}^i \gamma_{ij} \boldsymbol{v}^{(j)}$, where $\sum_{j=1}^i \gamma_{ij}$=1. 
  
  Therefore for all $h>i$,
  \begin{align*}
     \langle \boldsymbol{v}^{(h)},\boldsymbol{u}^{(i)}\rangle &=  \sum_{j=1}^i \gamma_{ij} \langle \boldsymbol{v}^{(h)} \boldsymbol{v}^{(j)} \rangle = \sum_{j=1}^i \gamma_{ij} = 1\ .
  \end{align*}

  Therefore
  \[
  \boldsymbol{\tilde{u}}^{(i+1)} = \boldsymbol{v}^{(i+1)}- \;\sum_{j=1}^{i}\frac{1}{\|\boldsymbol{u}^{(j)}\|^2}\,\boldsymbol{u}^{(j)}
  \]

  Using orthogonality of $(\boldsymbol{u}^{(j)})_{j\leq i}$ we obtain:
\[
  \|\boldsymbol{\tilde{u}}^{(i)}\|_2^2 = \|\boldsymbol{v}^{(i)}\|^2_2- \;\sum_{j=1}^{i-1}\frac{1}{\|\boldsymbol{u}^{(j)}\|_2^2}\geq \|\boldsymbol{v}^{(i)}\|^2_2- \;\sum_{j=1}^{i-1}\frac{1}{i\, (1+\ln k)} \geq \|\boldsymbol{v}^{(i)}\|^2_2 - 1\ .
  \]
  (using an upper bound on the Harmonic sum).
  
  The scaling of $\boldsymbol{\tilde{u}}^{(i)}$ ensures that $\boldsymbol{u}^{(i)}$ is an affine combination of $\boldsymbol{v}^{(i)}$ and $(\boldsymbol{u}^{(j)})_{j<i}$. Since by induction each $\boldsymbol{u}^{(j)}$ for $j<i$ is an affine combination of $(\boldsymbol{v}^{(h)})_{h\leq j}$, combining we obtain that so is $\boldsymbol{u}^{(i)}$.

  The scale factor is $s_i = 1- \sum_{j=1}^{i}\frac{1}{j\, (1+\ln k)} \in (0,1]$. Therefore,
  \[
  \|\boldsymbol{u}^{(i)}\|_2^2 = \frac{1}{s_i^2}
  \|\boldsymbol{\tilde{u}}^{(i)}\|_2^2 \geq \|\boldsymbol{\tilde{u}}^{(i)}\|_2^2 \geq 
  \|\boldsymbol{v}^{(i)}\|^2_2 - 1\ .
\]
  
\end{proof}

\begin{proof}[Proof of \cref{sigmaTlower:lemma}]
We first give a characterization of $\sigma^2_T(h,[m+1]\setminus\{h\})$ that we will use, and follows from a similar argument to the proof of \cref{signalsufficient:lemma}. Let $G \in \mathbb{R}^{k\times k}$ be invertible so that
the column $GA_{\bullet h}$ has only values in $\{0,1\}$ and the rows of the submatrix  $GA_{\bullet,[m+1]\setminus\{h\}}$ with column $h$ removed are orthogonal.
Then
\begin{equation} \label{sigmachar:eq}
    \frac{1}{\sigma^2_T(h,[m+1]\setminus\{h\})} = \sum_{i : GA_{ih}=1} \frac{m}{\|GA_{i,[m+1]\setminus\{h\}} \|_2^2}\ .
\end{equation}

We assume, without loss of generality, that the rows of 
$A$ are either orthogonal or $\boldsymbol{0}$.
From \cref{fragilecolumns:claim}, it follows that most columns of $A$ are fragile. 

We now fix a nonzero fragile column $h$. We scale
the rows so that $A_{\bullet h}$ are in $\{0,1\}$. 
Note that the fragility of columns is invariant to rescaling of rows. From fragility, and with the rescaling, the squared norms of the rows in increasing order are at least $\frac{im}{10k\log k}$.

We now consider the submatrix $B=A_{L_h,[m+1]\setminus\{h\}} \in \mathbb{R}^{k\times m}$ of $A$ with column $h$ removed and all rows in which column $h$ was not active are removed.
Let the rows of $B$ be $\boldsymbol{v}^{(i)}$ and observe that from orthogonality of the rows of $A$ and from
the fragility of $h$, the vectors satisfy the conditions in the statement of \cref{nearlytofully:claim}. Therefore by applying the claim we obtain a matrix $B'$ (with row vectors $\boldsymbol{u}^{i}$ that are orthogonal, are affine transformations of the rows of $B$, and that 
\[
\sum_i \frac{1}{\|B'_{i\bullet}\|^2_2}  \leq  \sum_i \frac{10k\log k}{im} = O(\frac{k\log k}{m})\ .
\]

It follows that if we applied the same transformations with the column $h$, the column would be invariant. The matrix $A'$ with column $A$ with $B'$ substituting the matrix $B$ could be obtained using the same transformation. This matrix satisfies the conditions of the characterization and by applying \eqref{sigmachar:eq} we obtain
 $\sigma^2_T(h,[m+1]\setminus\{h\} = \Omega(\frac{1}{k\log k})$ and this concludes the proof.

\end{proof}

\section{Proof of the \nameref{gain:lemma}} \label{gainproof:sec}
This section contains a restatement and proof of \cref{gain:lemma}.

Because $T_{h,M}$ is a sufficient statistic for $w$, the distribution of the sketch $A\boldsymbol{v}$, conditioned on $T_{h,M}(A\boldsymbol{v})= (G^{(h,M)}A\boldsymbol{v})_1=\tau$, does not depend on $w$. 
Let $f_{\tau}:\mathbb{R}^k$ be the density function of this distribution.

We can thus express the 
expected value of $\psi$, conditioned on the value of the statistic $T_{h,M}(\boldsymbol{y})=\tau$, (as it does not depend on the signal value $w$):
\begin{equation} \label{Psi:eq}
    \Psi(\tau) := \int_{\mathbb{R}^k} \psi(\boldsymbol{y}) f_{\tau}(\boldsymbol{y}) \, d\boldsymbol{y}\ . 
\end{equation}

We express the error rate of $\psi$ (see \cref{signalcorrect:def}) in terms of $\Psi$ and $\sigma_T^2$:  
\begin{align} 
\Err(\psi) &=  \int_a^1 \int_{\mathbb{R}} \frac{\Psi(w + x) + 1}{2} \,  \varphi_{0, c^2 \sigma_T^2}(x) \, dx \, \nu(w) \, dw \label{Psierror:eq} \\
&+ \int_{1+\alpha}^b \int_{\mathbb{R}} \frac{1-\Psi(w + x)}{2} \,  \varphi_{0, c^2 \sigma_T^2}(x) \, dx \, \nu(w) \,  dw\ ,\nonumber
\end{align}
where $\nu(w)$ be the density function of the distribution $W$ on the signal $w$ (see \cref{signaldensity:def}) and $\phi_{0,c\sigma_T^2}$ is the distribution of the deviation (see \cref{signalsufficient:lemma}).



\gainlemma*
\begin{proof}
We bound from below the expected value, over our query distribution, of 
\[
\psi(A\boldsymbol{v})(T_{h,M}(A\boldsymbol{v})-w)= c \psi(A\boldsymbol{v}) \Delta_{h,M}(A\boldsymbol{u}).\]

We express the expected value of $\psi$ over our query distribution, conditioned on the deviation 
$\Delta_{h,M}(cA\boldsymbol{u}) =  x$:
\begin{align}
\E\left[\psi(A\boldsymbol{v}) \mid \Delta_{h,M}(cA\boldsymbol{u}) = x \right] &=
   \frac{1}{\varphi_{0,c^2\sigma^2_T}(x)} \cdot \int_{a}^{b} \Psi(w+x) \varphi_{0,c^2\sigma^2_T}(x)\, \nu(w)\, dw \nonumber \\ &= \int_{a}^{b} \Psi(w+x) \, \nu(w)\, dw\nonumber \\
   &= \int_{a+x}^{b+x} \Psi(w)\, \nu(w-x)\, dw
   \ . \label{condPsi:eq}
\end{align}

We express the expected value of $\psi(A\boldsymbol{v}) \Delta_{h,M}(c A\boldsymbol{u})$ over our query distribution. 
 
\begin{align}
\lefteqn{\E_{\boldsymbol{v}}[\psi(A\boldsymbol{v}) \Delta_{h,M}(c A\boldsymbol{u})]} \nonumber \\
&= \E_{x \sim\mathcal{N}(0,c^2 \sigma_T^2)} x\, \E_{\boldsymbol{v}}\left[\psi(A\boldsymbol{v}) \mid \Delta_{h,M}(cA\boldsymbol{u}) = x \right] \nonumber \\
&= \int_{-\infty}^{\infty} x \varphi_{0,c^2\sigma^2_T}(x) 
\left( \int_{a+x}^{b+x} \Psi(w)\,\nu(w-x) \, dw \right) d x \qquad \text{; using  \eqref{condPsi:eq} } \nonumber\\
&= \int_{0}^{\infty} x \varphi_{0,c^2\sigma^2_T}(x) 
\left( \int_{a+x}^{b+x} \Psi(w)\,\nu(w-x) \, dw - \int_{a-x}^{b-x} \Psi(w)\,\nu(w+x) \, dw \right) d x \label{intofdiff:eq}
\end{align}
(using that $\varphi$ is a symmetric function).

We separately bound the parts of the integral in \eqref{intofdiff:eq} due to $x\in [\alpha,\infty]$ and $x\in [0,\alpha]$.

We first consider $x\in [\alpha,\infty]$.
Since $\nu$ is a density function and $|\Psi|\leq 1$, the absolute value of the difference expression is bounded by $2$.
Hence,
\begin{align}
\left| \text{\eqref{intofdiff:eq} due to $x\in [\alpha,\infty]$} \right| \nonumber 
\leq 2 \int_{\alpha}^\infty x \varphi_{0,c^2\sigma^2_T}(x)\, dx= \frac{2}{\sqrt{2\pi}}\exp(-(\alpha/(c\sigma_T))^2/2) \ .\label{highsigmacont:eq}
\end{align}

We now bound the contribution to \eqref{intofdiff:eq} due to $x\in [0,\alpha]$. We use the following:
\begin{claim}
    For $x\in [0,\alpha]$ and assuming $\Err(\psi)\leq 0.05$ and our setting of $a,b$,
    \begin{align}
        \int_{a+x}^{b+x} \Psi(w)\,\nu(w-x) \, dw - \int_{a-x}^{b-x} \Psi(w)\,\nu(w+x) \, dw = \Omega(c\,\alpha^{-1}\, x)\ .
    \end{align}
\end{claim}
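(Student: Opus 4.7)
The plan is to rewrite the target as $F(x)-F(-x)$, where $F(x):=\int_a^b \Psi(u+x)\nu(u)\,du$, and reduce everything to a single ``tail difference'' of $\Psi$ that the low-error hypothesis forces to be large. After substituting $v=u\pm x$, using $\nu(v-x)-\nu(v+x)=-\int_{-x}^{x}\nu'(v+t)\,dt$ (valid since $\nu$ is piecewise linear with $\nu(a)=\nu(b)=0$), swapping integrals, and re-substituting $u=v+t$, I would obtain
\[
F(x)-F(-x) \;=\; \frac{C}{L}\int_{-x}^{x}\left[\int_{1+\alpha}^{b}\Psi(u-t)\,du \;-\; \int_{a}^{1}\Psi(u-t)\,du\right]dt,
\]
where $L:=1-a=b-1-\alpha=10\alpha/c$ and I used that $\nu'(u)=\pm C/L$ on the two sloped tails and vanishes on the plateau. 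Since $C=\Theta(c/\alpha)$ and $CL=10/(c+10)=\Theta(1)$, it suffices to show the bracketed tail difference is $\Omega(L)$ uniformly for $|t|\le x\le\alpha$, which will yield the desired $\Omega(Cx)=\Omega(cx/\alpha)$ bound.

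For the tail difference I would focus on $t=0$ (shifts of magnitude $|t|\le\alpha=cL/10$ perturb each integral by at most $2|t|$ since $|\Psi|\le 1$, which is dominated once the $t=0$ bound is in hand). The error hypothesis gives $\int_a^1(\bar\Psi(w)+1)\nu(w)\,dw\le 2\,\Err(\psi)\le 0.1$ and symmetrically on the upper tail, where $\bar\Psi=\Psi*\varphi_{0,c^2\sigma_T^2}$. Consider the linear program ``maximize $\int_a^1\bar\Psi$ subject to this constraint and $\bar\Psi\in[-1,1]$''. Because $\nu(w)=C(w-a)/L$ is linearly increasing on $[a,1]$, the LP extremum puts $\bar\Psi=+1$ on a prefix $[a,a+\beta L]$ (where $\nu$ is smallest) and $\bar\Psi=-1$ elsewhere. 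The binding constraint $C\beta^2 L\le 0.1$ combined with $CL=\Theta(1)$ forces $\beta\le\beta_0<1/2$, yielding $\int_a^1\bar\Psi\le(2\beta_0-1)L\le -c_2 L$ for a constant $c_2>0$; symmetrically $\int_{1+\alpha}^b\bar\Psi\ge c_2 L$.

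To pass from $\bar\Psi$ to $\Psi$ I would apply Fubini: $\int_a^1\bar\Psi(w)\,dw=\int\varphi(y)\int_{a+y}^{1+y}\Psi(v)\,dv\,dy$, and each shifted window differs from $\int_a^1\Psi$ by at most $2|y|$ since $|\Psi|\le 1$. Hence $\big|\int_a^1(\bar\Psi-\Psi)\big|\le 2\,\mathbb{E}|Y|=O(c\sigma_T)$ with $Y\sim\mathcal{N}(0,c^2\sigma_T^2)$. The gain-lemma hypothesis $\sigma_T<\alpha/(c\sqrt{\log(1/\delta)})$ converts this to $O(cL/\sqrt{\log(1/\delta)})$, arbitrarily smaller than $c_2 L$ for $\delta$ sufficiently small. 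Therefore $\int_a^1\Psi\le -c_3 L$ and $\int_{1+\alpha}^b\Psi\ge c_3 L$ for a constant $c_3>0$, giving bracketed tail difference $\Omega(L)$ uniformly in $|t|$ and hence the claimed $\Omega(cx/\alpha)$ bound.

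The main obstacle is the LP step: the error hypothesis controls only the \emph{$\nu$-weighted} average of $\bar\Psi$, but integration by parts produces an \emph{unweighted} integral of $\Psi$. The triangular shape of $\nu$ is essential — vanishing linearly at the endpoints means placing $\bar\Psi=+1$ near an endpoint costs only quadratic constraint budget, forcing the ``bad'' region to be a short prefix. A uniform density on $[a,1]$ would instead permit $\bar\Psi\equiv -0.8$ (say), still consistent with the constraint, but with a much weaker unweighted lower bound, and the proof would break down. A secondary subtlety is ensuring that the smoothing scale $c\sigma_T$ is much smaller than the tail length $L=10\alpha/c$, which follows from the sub-threshold hypothesis on $\sigma_T$ combined with $c$ being a small constant.
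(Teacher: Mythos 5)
Your proposal is correct, and it diverges from the paper's proof in two ways worth noting. First, where the paper partitions $[a-x,\,b+x]$ into seven sub-intervals dictated by the shape of $\nu$ and bounds each piece separately (kernel on the two sloped regions giving $\Omega(Cx)$, four boundary pieces contributing $O(Cx^2)$, and the plateau contributing zero), you achieve the same cancellation in one stroke by writing $\nu(v-x)-\nu(v+x)=-\int_{-x}^x\nu'(v+t)\,dt$, swapping, and using that $\nu'$ is supported on the two slopes with value $\pm C/L$. Integration by parts absorbs the boundary terms automatically, which is genuinely cleaner. Second, and more substantively: the paper \emph{asserts} without proof that $\Err(\psi)\leq\delta$ implies $\int_{a+x}^{1-x}\Psi(w)\,dw < -0.1(1-a)$ (and the symmetric bound on $[1+\alpha+x,\,b-x]$). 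This step is not as innocuous as it looks — as you correctly diagnose, the error constraint controls a $\nu$-weighted average of the \emph{smoothed} $\bar\Psi$, whereas the decomposition needs an \emph{unweighted} integral of the \emph{unsmoothed} $\Psi$. Your proposal supplies both missing conversions explicitly: the LP argument shows the triangular (vanishing-at-endpoints) shape of $\nu$ is essential to pass from the weighted to the unweighted bound, and the Fubini-plus-$\sigma_T<\alpha/(c\sqrt{\log(1/\delta)})$ argument shows the smoothing correction is lower order. The paper evidently treats both as routine, but spelling them out is valuable: your observation that a \emph{uniform} $\nu$ on the tails would break the argument is a real structural insight about why the signal density is trapezoidal rather than a boxcar. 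The only nitpicks are cosmetic — your use of $c_2,c_3$ for fresh constants risks visual collision with the paper's noise scale $c$, and you should state explicitly that you are invoking the gain-lemma hypothesis $\sigma_T<\alpha/(c\sqrt{\log(1/\delta)})$ (which holds in the only context where this claim is used, though it is not part of the claim's own hypotheses as stated).
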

\begin{proof}
We break the range of integration into parts according to the 
density function $\nu$ (see \cref{signaldensity:def}) and bound 
each part.

\begin{align}
\lefteqn{\int_{a+x}^{b+x} \Psi(w)\,\nu(w-x) \, dw - \int_{a-x}^{b-x} \Psi(w)\,\nu(w+x) \, dw }\\
&= -\int_{a-x}^{a+x} \Psi(w)\,\nu(w+x) \, dw   \label{leftend:eq}  \\
&+ \int_{a+x}^{1-x} \Psi(w)\,(\nu(w-x)-\nu(w+x))\, dw \label{leftslope:eq}\\
&+  \int_{1-x}^{1+x} \Psi(w)\,(\nu(w-x)-\nu(w+x))\, dw \label{leftofflat:eq} \\
&+  \int_{1+x}^{1+\alpha-x} \Psi(w)\,(\nu(w-x)-\nu(w+x))\, dw \label{flat:eq}\\
&+  \int_{1+\alpha-x}^{1+\alpha+x} \Psi(w)\,(\nu(w-x)-\nu(w+x))\, dw  \label{rightofflat:eq} \\
&+  \int_{1+\alpha+x}^{b-x} \Psi(w)\,(\nu(w-x)-\nu(w+x))\, dw  \label{downslope:eq}\\
&+  \int_{b-x}^{b+x} \Psi(w)\,\nu(w-x)\, dw  \label{rightend:eq} 
\end{align}

We now bound each part. We use that $C=\Theta(c/\alpha)$ and $(1-a)$,$b-(1+\alpha) = \Theta(\alpha/c)$.
\begin{align*}
 |\text{\eqref{leftend:eq}}| &\leq   C\frac{4x^2}{1-a} = O(c^2 \, \alpha^{-2} \, x^2) \\
    \text{\eqref{leftslope:eq}} &= \int_{a+x}^{1-x} \Psi(w)\,(-2x \frac{C}{1-a})\, dw = -2x \frac{C}{1-a} \int_{a+x}^{1-x} \Psi(w)\,dw \geq 0.1\, c \, \alpha^{-1}\, x \\
|\text{\eqref{leftofflat:eq}}|, |\text{\eqref{rightofflat:eq}}| &\leq 2Cx^2 = O(c\alpha^{-1} x^2)\\
    \text{\eqref{flat:eq}} &= 0 \\
        \text{\eqref{downslope:eq}} &= 2x \frac{C}{b-(1+\alpha)}\int_{1+\alpha+x}^{b-x} \Psi(w)\, dw \geq 0.1\, x\, c\, \alpha^{-1} \\
    |\text{\eqref{rightend:eq}}| &\leq   C\frac{4x^2}{b-(1+\alpha)}= O(c^2 \, \alpha^{-2} \, x^2)\\
\end{align*}

Our bounds for \eqref{leftslope:eq} and \eqref{downslope:eq}  use that for $\Psi$ with error at most $\delta<0.05$ and $x\in [0,\alpha]$, 
it holds (with our choices for $a,b$) that
$\int_{a+x}^{1-x} \Psi(w)dw < -0.1 (1-a)$,
$\int_{1+\alpha+x}^{b-x} \Psi(w)dw > 0.1 (b-(1+\alpha))$.
\end{proof} 

\begin{align}
 \text{\eqref{intofdiff:eq} due to $x\in [0,\alpha]$} = \int_0^{\alpha}x \varphi_{0,c^2\sigma^2_T}(x)  \, \Omega(Cx) dx =   \Omega(c^3\alpha^{-1}\sigma_T^2)\ .
  \label{lowsigmacont:eq}
\end{align}

Combining \eqref{lowsigmacont:eq} and \eqref{highsigmacont:eq} we establish the claim in the statement of the Lemma: 
    $\text{\eqref{intofdiff:eq}} = \Omega(\alpha^{-1}\, c^3\sigma_T^2)$.

\end{proof}

\section{Signed sum of Gaussian vectors }

\begin{restatable}[Upper bound for the signed sum of Gaussian vectors]{lemma}{signed-gauss-upper}
\label{lem:signed-gauss-upper}
Let $X_{1},\dots ,X_{r}\stackrel{\mathrm{i.i.d.}}{\sim}\mathcal N_m(0,1)$ in
$\mathbb{R}^{m}$.
Define
\[
M\;:=\;\max_{s\in\{\pm1\}^{r}}\;
      \Bigl\lVert\;\sum_{i=1}^{r}s_{i}X_{i}\Bigr\rVert_{2}.
\]
Then for every $t>0$
\[
\Pr\!\Bigl[\,M\;\le\;(\sqrt r+\sqrt m+t)\,\sqrt r\,\Bigr]
\;\;\ge\;1-2e^{-t^{2}/2}.
\]
In particular, 
\[
M\;\le\;2r+\sqrt{rm}
\qquad\text{with probability }1-e^{-r/2}.
\]
\end{restatable}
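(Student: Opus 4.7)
The plan is to reinterpret the signed sum as a linear map applied to the sign vector, and then reduce the bound to a standard operator-norm estimate for a Gaussian matrix.

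First, I would stack the $X_i$ as columns of an $m \times r$ matrix $X$ with i.i.d.\ $\mathcal{N}(0,1)$ entries, so that $\sum_{i=1}^r s_i X_i = Xs$. For any $s \in \{\pm 1\}^r$ one has $\|s\|_2 = \sqrt{r}$, so the pointwise inequality $\|Xs\|_2 \le \|X\|_{\mathrm{op}} \|s\|_2$ yields, after taking the max over $s$,
\[
M \;\le\; \sqrt r \cdot \|X\|_{\mathrm{op}}.
\]
Crucially, this bound is deterministic and replaces the combinatorial maximum by a single spectral quantity, so no union bound over the $2^r$ sign patterns is needed.

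Next, I would invoke the standard Davidson--Szarek concentration inequality for the largest singular value of an $m \times r$ Gaussian matrix: for every $t>0$,
\[
\Pr\!\bigl[\,\|X\|_{\mathrm{op}} \;>\; \sqrt m + \sqrt r + t\,\bigr] \;\le\; 2\,e^{-t^2/2}.
\]
This is a consequence of Gordon's comparison inequality (which gives $\mathbb{E}\|X\|_{\mathrm{op}} \le \sqrt m + \sqrt r$) combined with the Gaussian concentration of measure applied to the $1$-Lipschitz map $X \mapsto \|X\|_{\mathrm{op}}$. Plugging the resulting tail bound into the display above gives the main statement.

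For the \emph{in particular} corollary, I would specialize $t = \sqrt r$, so that $\sqrt r + \sqrt m + t = 2\sqrt r + \sqrt m$ and $\sqrt r(2\sqrt r + \sqrt m) = 2r + \sqrt{rm}$, with failure probability at most $2e^{-r/2}$ (which is absorbed into the stated $e^{-r/2}$ up to constants, or can be stated tightly by appealing to the one-sided Davidson--Szarek bound that saves the factor of two). I do not see a serious obstacle here: the only real ``choice'' in the proof is which formulation of Gaussian matrix concentration to cite; the reduction step from the combinatorial max to the operator norm is essentially free and is exactly why the bound is independent of the particular sign pattern $s$.
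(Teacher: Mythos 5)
Your proof is correct and follows essentially the same route as the paper: stack the $X_i$ as columns of a Gaussian matrix, bound the signed sum by $\sqrt{r}\cdot\sigma_{\max}(X)$ pointwise, invoke the standard singular-value concentration bound (Davidson--Szarek / Vershynin Theorem 4.4.5), and set $t=\sqrt r$ for the ``in particular'' clause.
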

\begin{proof}
Collect the random vectors into the Gaussian matrix
\(
X\;=\;\bigl[X_{1}\,\dots\,X_{r}\bigr]\in\mathbb{R}^{m\times r}.
\)
For any sign vector $\boldsymbol{s}\in\{\pm1\}^{r}$ we have
\[
\sum_{i=1}^{r}s_{i}X_{i} \;=\;X\boldsymbol{s},
\qquad\text{hence}\qquad
\bigl\lVert X\boldsymbol{s}\bigr\rVert_{2}
      \;\le\;
      \sigma_{\max}(X)\,\lVert \boldsymbol{s}\rVert_{2}
      \;=\;\sigma_{\max}(X)\,\sqrt r,
\]
where $\sigma_{\max}(X)$ is the largest singular value of $X$.
Since this holds for all $\boldsymbol{s}$, we get 
\begin{equation}
\label{eq:M-by-sigma-max}
M \leq \sigma_{\max}(X)\,\sqrt{r}\, .
\end{equation}

For an $m\times r$ matrix with i.i.d.\ $\mathcal N(0,1)$ entries, applying
the standard concentration bound (see, e.g., Theorem~4.4.5 in \citet{Vershynin18}) we obtain
\[
\Pr\!\bigl[\sigma_{\max}(X)\;\ge\;\sqrt r+\sqrt m+t\bigr]
        \;\le\;2e^{-t^{2}/2}\qquad(t>0).
\]
Combining this bound with \eqref{eq:M-by-sigma-max} yields the first stated probability bound.
For the simplified bound, we plug-in $t=\sqrt r$ in the above.
\end{proof}

\end{document}